\documentclass[11pt, a4paper, copyright]{google}

\usepackage[authoryear, sort&compress, round]{natbib}
\makeatletter
\renewcommand{\absfont}{\normalfont\linespread{1.2}\fontsize{11}{12}\selectfont}
\makeatother
\definecolor{linkblue}{RGB}{0,0,139}      
\definecolor{navy}{RGB}{0,0,128}          
\definecolor{royalblue}{RGB}{65,105,225}  
\definecolor{steelblue}{RGB}{70,130,180}  
\definecolor{dodgerblue}{RGB}{30,144,255} 
\definecolor{mediumblue}{RGB}{0,0,205}    
\definecolor{darkslateblue}{RGB}{72,61,139} 
\usepackage[colorlinks = true,
            linkcolor = linkblue,
            urlcolor  = royalblue,
            citecolor = brown,
            anchorcolor = blue]{hyperref}

\usepackage[misc]{ifsym}
\usepackage{minitoc}
\usepackage{cleveref} 
\usepackage{subcaption}
\usepackage{booktabs}
\usepackage{amsmath}
\usepackage{mathtools}
\usepackage{amssymb}
\usepackage{graphicx}   
\usepackage{booktabs}   
\usepackage{algorithmic} 
\usepackage{algorithm}
\usepackage{tcolorbox}
\usepackage{verbatim}   
\usepackage{makecell}   
\usepackage{array}      
\usepackage{multirow}
\usepackage{xurl}
\usepackage{amsthm}
\usepackage{booktabs}
\usepackage{adjustbox}
\usepackage{caption}

\usepackage{listings}
\tcbuselibrary{listings,skins,breakable}

\lstdefinestyle{prompt}{
  basicstyle=\ttfamily\footnotesize,
  breaklines=true,           
  breakatwhitespace=false,   
  columns=fullflexible,      
  keepspaces=true,           
  showstringspaces=false,
  postbreak=\mbox{\textcolor{gray}{$\hookrightarrow$}\space} 
}

\newtheorem{theorem}{Theorem}

\newcommand{\numcell}[2][3.6em]{%
  \makebox[#1][r]{#2\hspace{1.8em}}%
}
\newcommand{\posdelta}[1]{%
  \makebox[0pt][r]{\smash{\raisebox{-0.25ex}{\scriptsize\textcolor{ForestGreen}{$+#1$}}}}%
}

\makeatletter
\newcommand{\appendixtableofcontents}{%
  \begingroup
  \renewcommand{\contentsname}{Appendix Contents}%
  \@starttoc{atoc}%
  \endgroup
}

\newcommand{\startappendixtoc}{%
  \let\oldaddcontentsline\addcontentsline
  \renewcommand{\addcontentsline}[3]{%
    \def\@tempa{##1}\def\@tempb{toc}%
    \ifx\@tempa\@tempb
      \oldaddcontentsline{atoc}{##2}{##3}%
    \else
      \oldaddcontentsline{##1}{##2}{##3}%
    \fi
  }%
}
\makeatother

\uselogo{} 

\title{RLAnything: Forge Environment, Policy, and Reward Model in Completely Dynamic RL System}

\correspondingauthor{yangling0818@163.com}

\author[ ]{Yinjie Wang}
\author[ ]{Tianbao Xie}
\author[ ]{Ke Shen}
\author[ \ \Letter]{Mengdi Wang}
\author[ \ \Letter]{Ling Yang}

\affil[ \Letter]{\ Corresponding author}

\begin{abstract}
\vspace{-0.2in}

{\fontsize{12pt}{12pt} \selectfont \raisebox{-0.06em}{\includegraphics[height=1em]{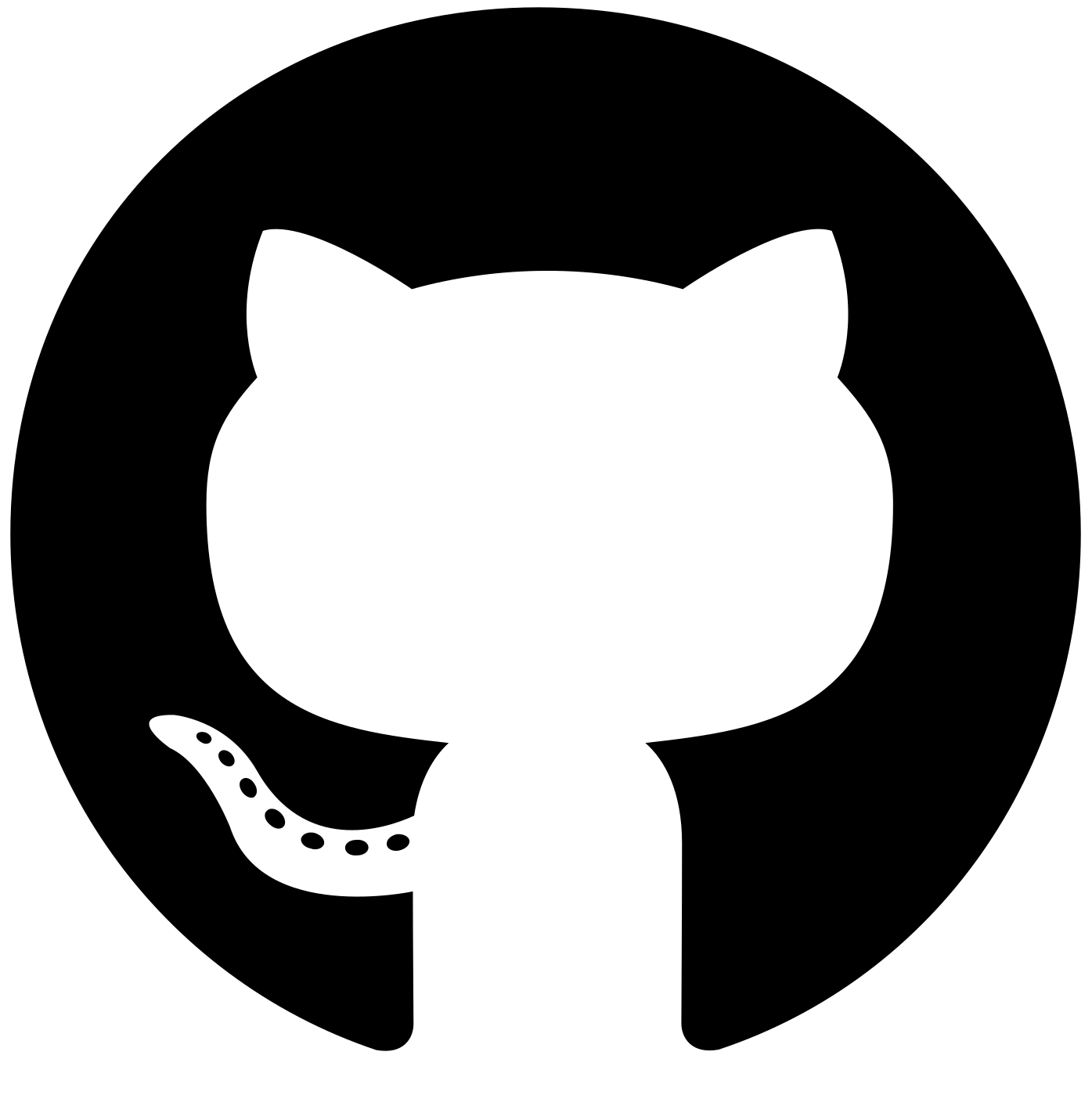}} Code: \href{https://github.com/Gen-Verse/Open-AgentRL}{https://github.com/Gen-Verse/Open-AgentRL} \quad \raisebox{-0.06em}{\includegraphics[height=1em]{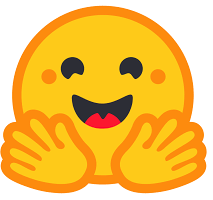}} Models: \href{https://huggingface.co/collections/Gen-Verse/open-agentrl}{Policy \& Reward}}
\\
\\
We propose \textbf{RLAnything}, a reinforcement learning framework that dynamically forges environment, policy, and
reward models through closed-loop optimization, amplifying learning signals and strengthening the overall RL system for \textbf{\textit{any LLM or agentic scenarios}}. Specifically, the policy is trained with integrated feedback from step-wise and outcome signals, while the reward model is jointly optimized via consistency feedback, which in turn further improves policy training. Moreover, our theory-motivated automatic environment adaptation improves training for both the reward and policy models by leveraging critic feedback from each, enabling learning from experience. Empirically, each added component consistently improves the overall system, and RLAnything yields substantial gains across various representative LLM and agentic tasks, boosting Qwen3-VL-8B-Thinking by $9.1\%$ on OSWorld and Qwen2.5-7B-Instruct by $18.7\%$ and $11.9\%$ on AlfWorld and LiveBench, respectively. We also that optimized reward-model signals outperform outcomes that rely on human labels.
\end{abstract}

\begin{document}

\maketitle

\begin{figure}[h]
  \centering

  \begin{minipage}[t]{0.49\textwidth}
    \centering
    \includegraphics[width=\linewidth]{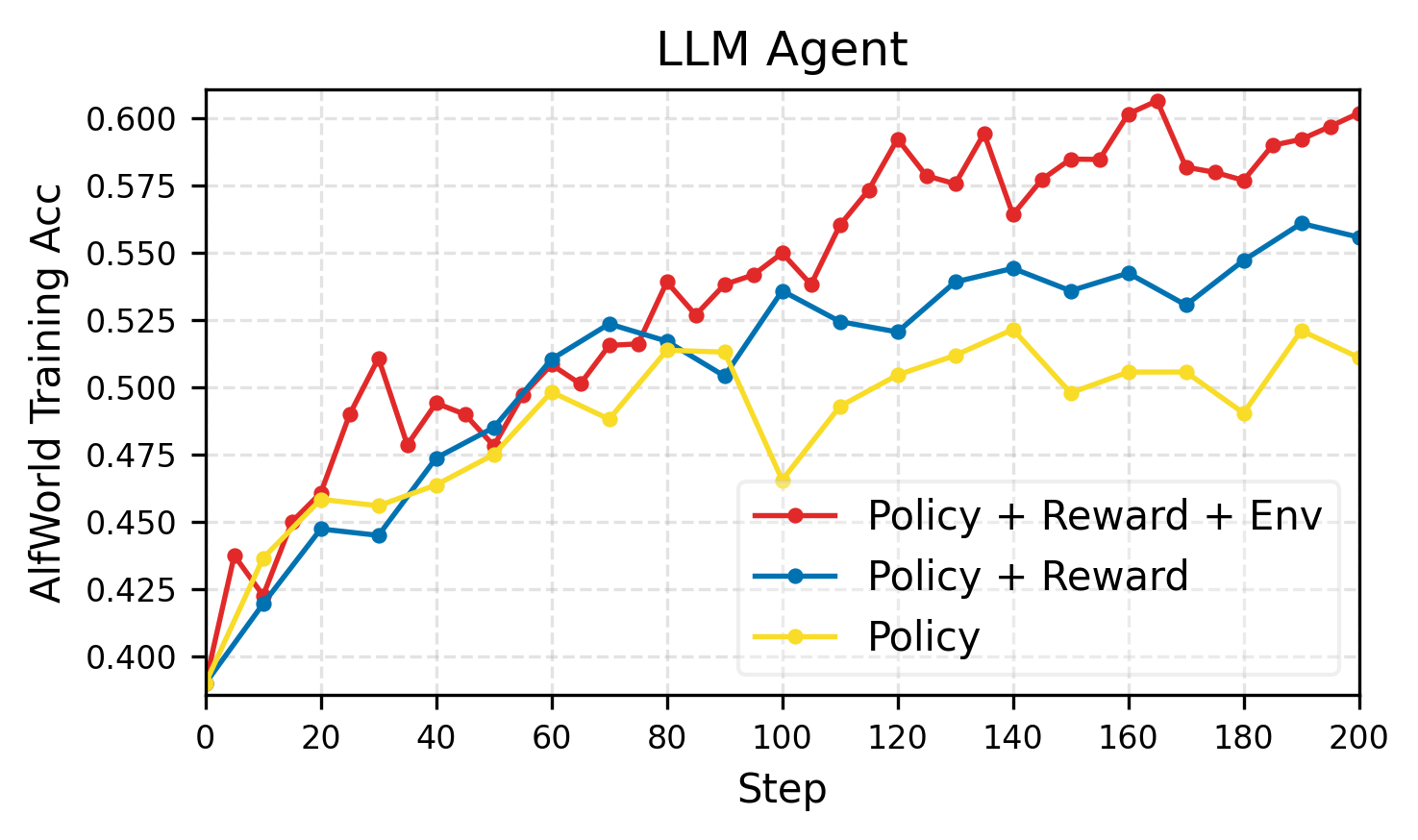}
{\captionsetup{labelformat=empty,font=small}
 \vspace{-8mm}
\hyphenpenalty=10000\exhyphenpenalty=10000\emergencystretch=1em
 \caption*{1. Jointly optimizing the reward model and the environment, in turn, benefits the policy's learning curve, yielding higher converged accuracy.}}
    \label{fig:alfworld-acc-band2}
  \end{minipage}\hfill
  \begin{minipage}[t]{0.49\textwidth}
    \centering
    \includegraphics[width=\linewidth]{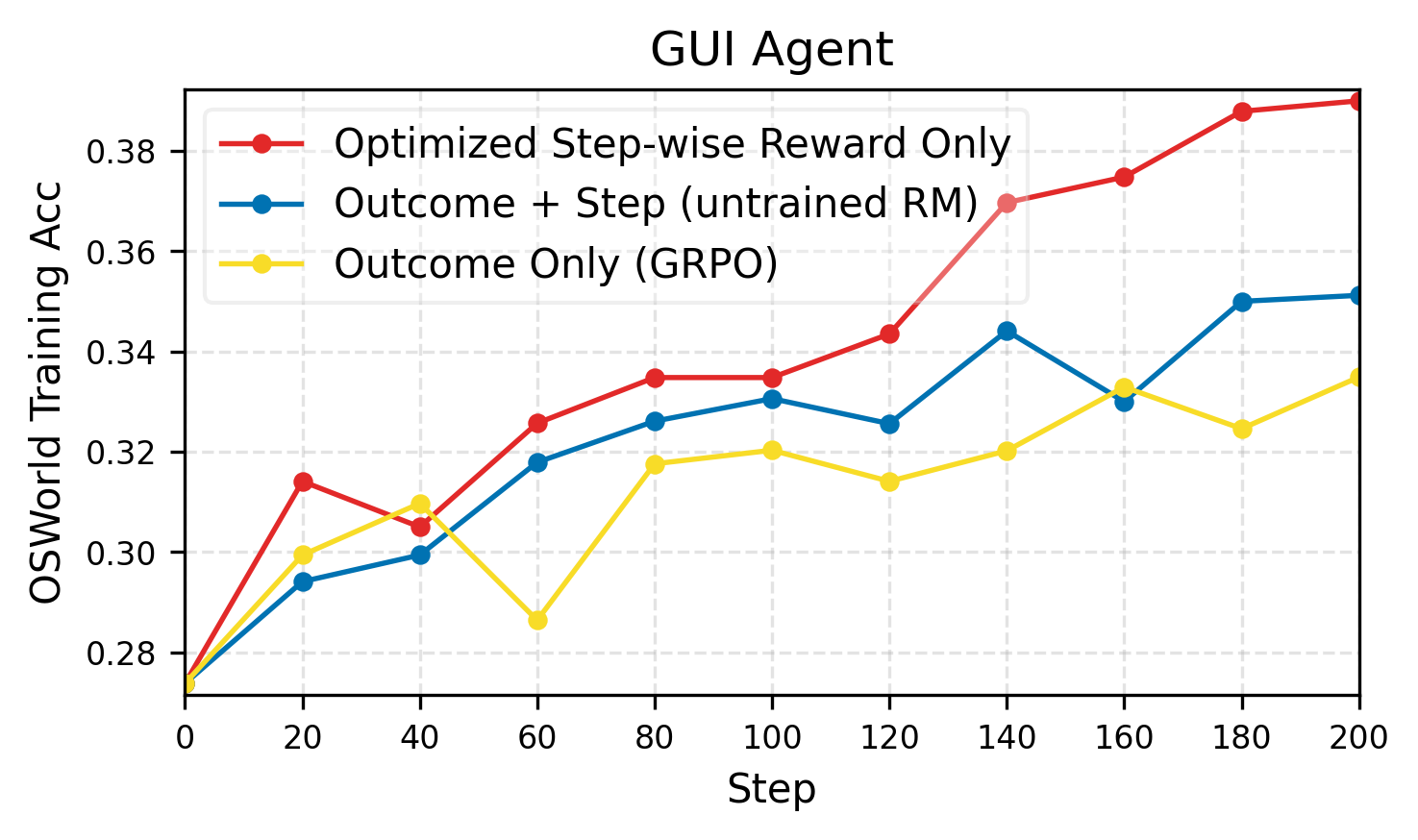}
{\captionsetup{labelformat=empty,font=small}
 \vspace{-8mm}
 \caption*{2. Step-wise signals from optimized reward model outperform human-labeled outcome signals. Moreover, integrated feedback is vital for long-trajectory tasks.}}
    \label{fig:osworld-step-outcome-curve}
  \end{minipage}

\vspace{4mm} 
\hspace*{0.02\textwidth}
  \begin{minipage}[t]{0.45\textwidth}
    \centering
    \includegraphics[width=\linewidth]{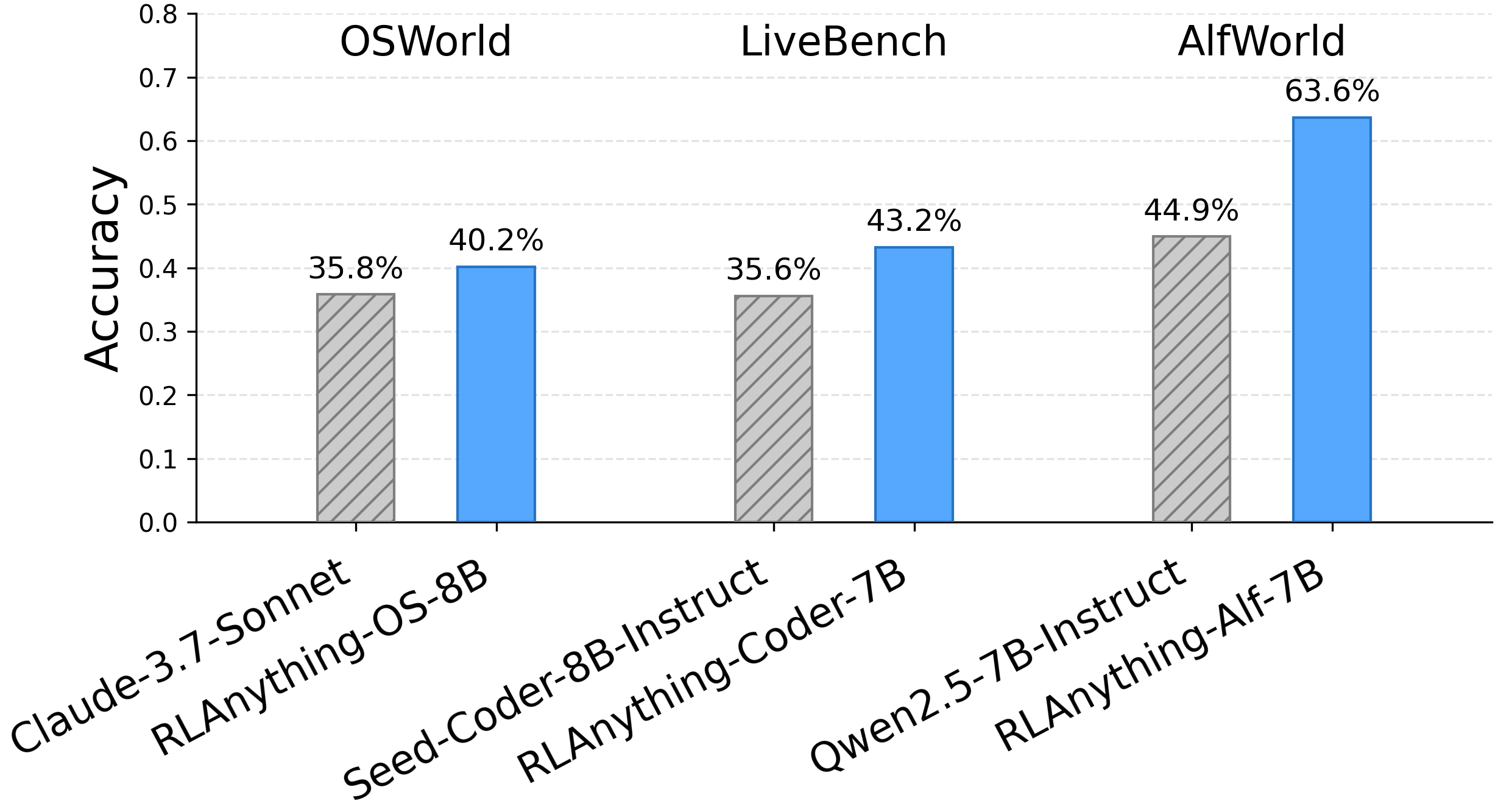}
    {\captionsetup{labelformat=empty,font=small}
     \vspace{-8mm}
     \hyphenpenalty=10000\exhyphenpenalty=10000\emergencystretch=1em
     \caption*{3. We demonstrate effectiveness of RLAnything across diverse real-world applications, including computer control, coding, and text-based games.}}
  \end{minipage}\hfill
  \begin{minipage}[t]{0.45\textwidth}
    \centering
    \includegraphics[width=\linewidth]{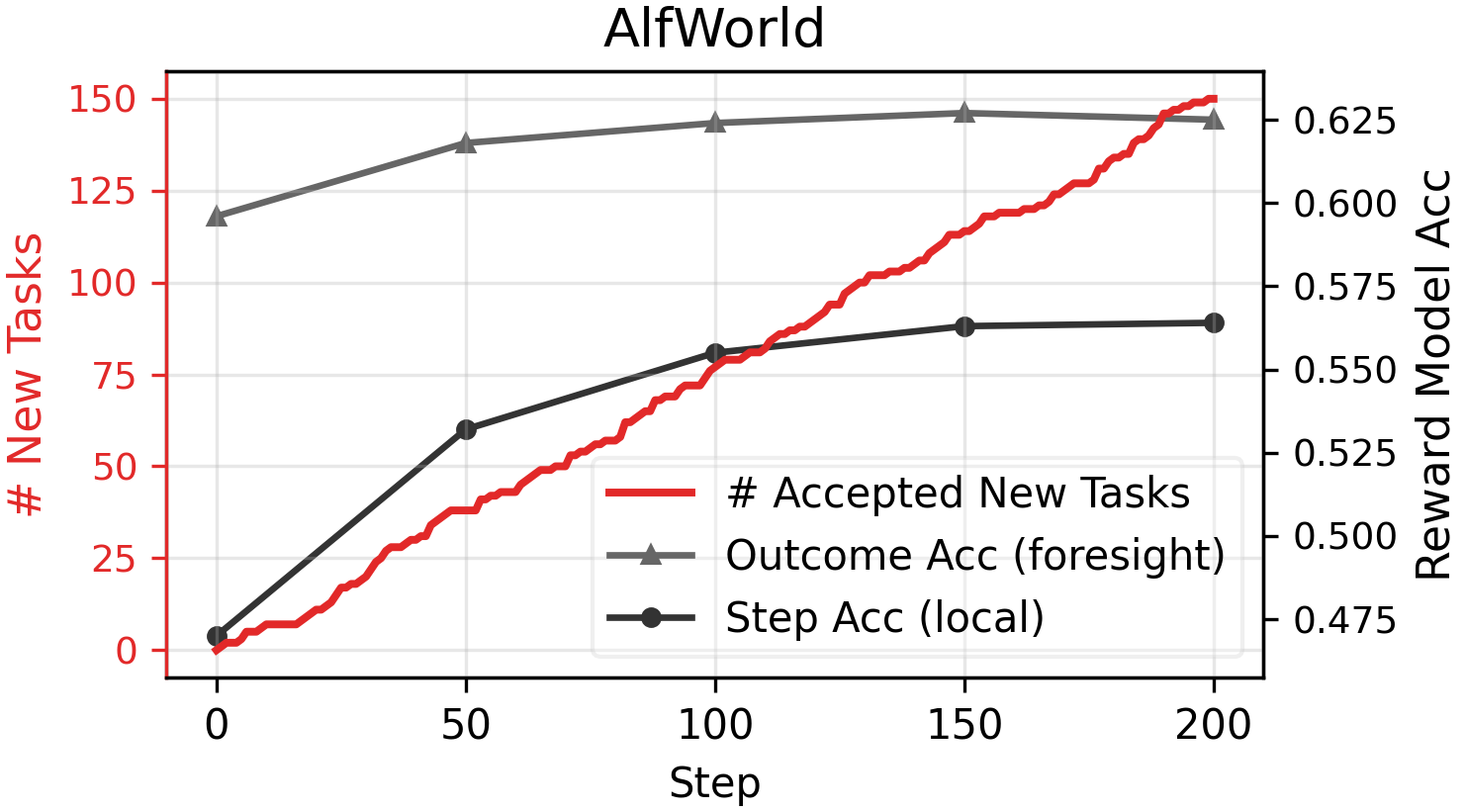}
    {\captionsetup{labelformat=empty,font=small}
     \vspace{-8mm}
     \hyphenpenalty=10000\exhyphenpenalty=10000\emergencystretch=1em
     \caption*{4. New environment tasks scale linearly, and the reward model gets stronger at evaluating both current-step correctness and outcome influence.}}
  \end{minipage}
  \caption{Summarized experimental results and key insights from our RLAnything framework. }
\end{figure}

\newpage

\section{Introduction}

Reinforcement learning with verifiable rewards (RLVR) is an effective approach for improving the reasoning capabilities of large language models \citep{o1, deepseekmath, guo2025deepseek}. However, as real-world applications extend beyond single-turn question answering, especially when policies interact with environments iteratively over long trajectories, binary outcome rewards alone provide insufficient supervision \citep{xiong2024watch, lightman2023let, xi2025agentprm}. Step-wise signals are typically provided by generative reward models, which often outperform scalar-based models by leveraging the reasoning capabilities of language models \citep{zhang2024generative, liu2025inference}. However, training these models usually requires collecting high-quality, task-specific supervision \citep{xi2025agentprm, zhang2025generative}, motivating the need for more automated methods and scalable supervision.

\begin{figure*}[t!]
  \centering
  \includegraphics[width=0.95\linewidth]{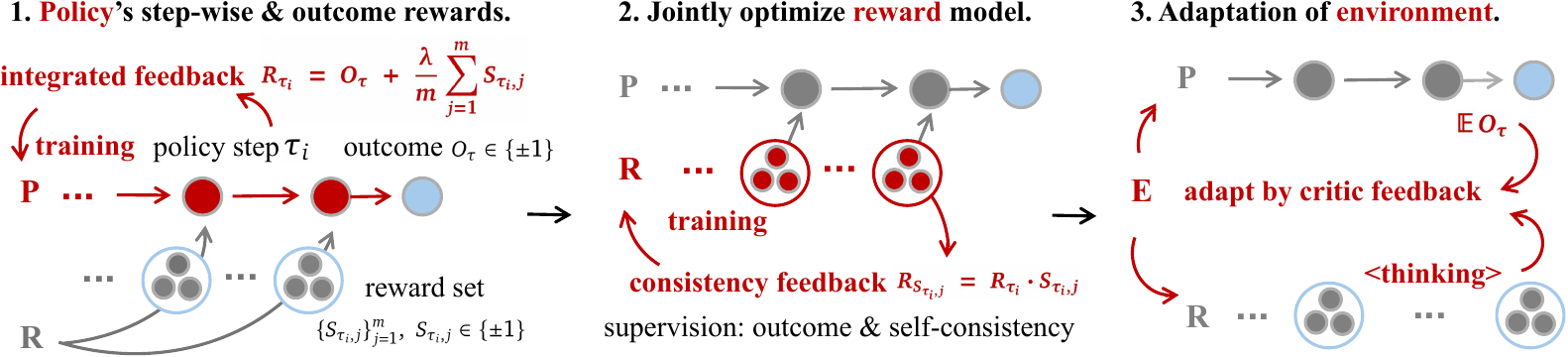}
  \caption{Motivation and takeaways of our RLAnything framework. First, in complex real-world applications, reinforcement learning benefits from integrating step-wise rewards with outcome rewards. Second, the reward model can be jointly optimized with the policy via outcome supervision and self-consistency signals. Third, we show that adapting environment task difficulty to the policy’s capability not only facilitates policy learning but also improves reward model training within our framework. Environment tasks leverage critic feedback from both the policy and the reward model to drive automatic, targeted adaptation, further enabling active learning from experience.}
  \label{motivation}
\end{figure*}

Beyond reward design, the quality of the environment is also vital for scaling reinforcement learning. Aligning task difficulty with a model’s current capabilities is known to improve training dynamics \citep{yu2025dapo, yang2025qwen3}. In RLVR, it has been shown that adapting task difficulty during optimization can improve policy training \citep{zeng2025rlve}. In real-world environments, such as computers for GUI agents \citep{xie2024osworld, wang2025ui} or the physical world for robots \citep{kober2013reinforcement}, the scope of exploration is largely defined by the task. Moreover, scaling the environment by increasing task diversity can further promote policy generalization in broader scenarios \citep{cobbe2020leveraging, team2021open, fang2025towards, cai2025autoforge, song2026envscaler, chen2025scaling}.

\textbf{If there exists an RL system that jointly optimizes the environment, policy, and reward model to amplify learning signals and strengthen the overall system?}

In this work, we propose \textbf{RLAnything}, a dynamic RL framework that forges the environment, policy, and reward model in a closed-loop system, where each component continuously receives feedback from the others to amplify learning signals across various complex LLM or agentic scenarios.
First, the policy is trained with integrated feedback that combines verifiable outcome rewards with step-wise signals provided by the reward model. Second, the reward model is jointly optimized via consistency feedback based on outcome and self-consistency, producing reliable step-wise supervision that in turn improves policy learning. Finally, motivated by our theoretical results, we show that balancing task difficulty benefits not only policy training but also reward model training in our RL system. Accordingly, we adapt environment tasks using critic feedback from both the policy and reward model, enabling precise and automatic task adjustment. In particular, we feed the reward model’s summarized information, which captures the policy’s failures, into a language model to perturb the task, providing concrete guidance on how to modify it.
To demonstrate the generality of our framework, we conduct empirical studies in three representative scenarios on computer use setting \citep{xie2024osworld}, text-based interactive games \citep{cote2018textworld, shridhar2020alfworld}, and coding LLMs. We summarize our main contributions as follows:

\begin{itemize}
    \item We propose RLAnything, a fully dynamic RL system that forges the environment, policy, and reward model through closed-loop optimization to amplify learning signals and strengthen the overall system, guided by our theoretical insights.
    \item Across computer-use agents, text-based LLM agents, and coding LLMs, we show that each added dynamic component consistently benefits the overall system and improves out-of-distribution performance.
    \item We achieve significant gains in practical applications: Qwen3-VL-8B-Thinking improves by $9.1\%$ on OSWorld, and Qwen2.5-7B-Instruct improves by $18.7\%$ and $11.9\%$ on AlfWorld and LiveBench, respectively.
    \item We show broad applicability: optimized reward-model signals outperform outcomes that rely on human labels, enabling active learning from experience and potential environment scaling.
\end{itemize}

\section{RLAnything}

\begin{figure*}[t!]
  \centering
  \includegraphics[width=0.99\linewidth]{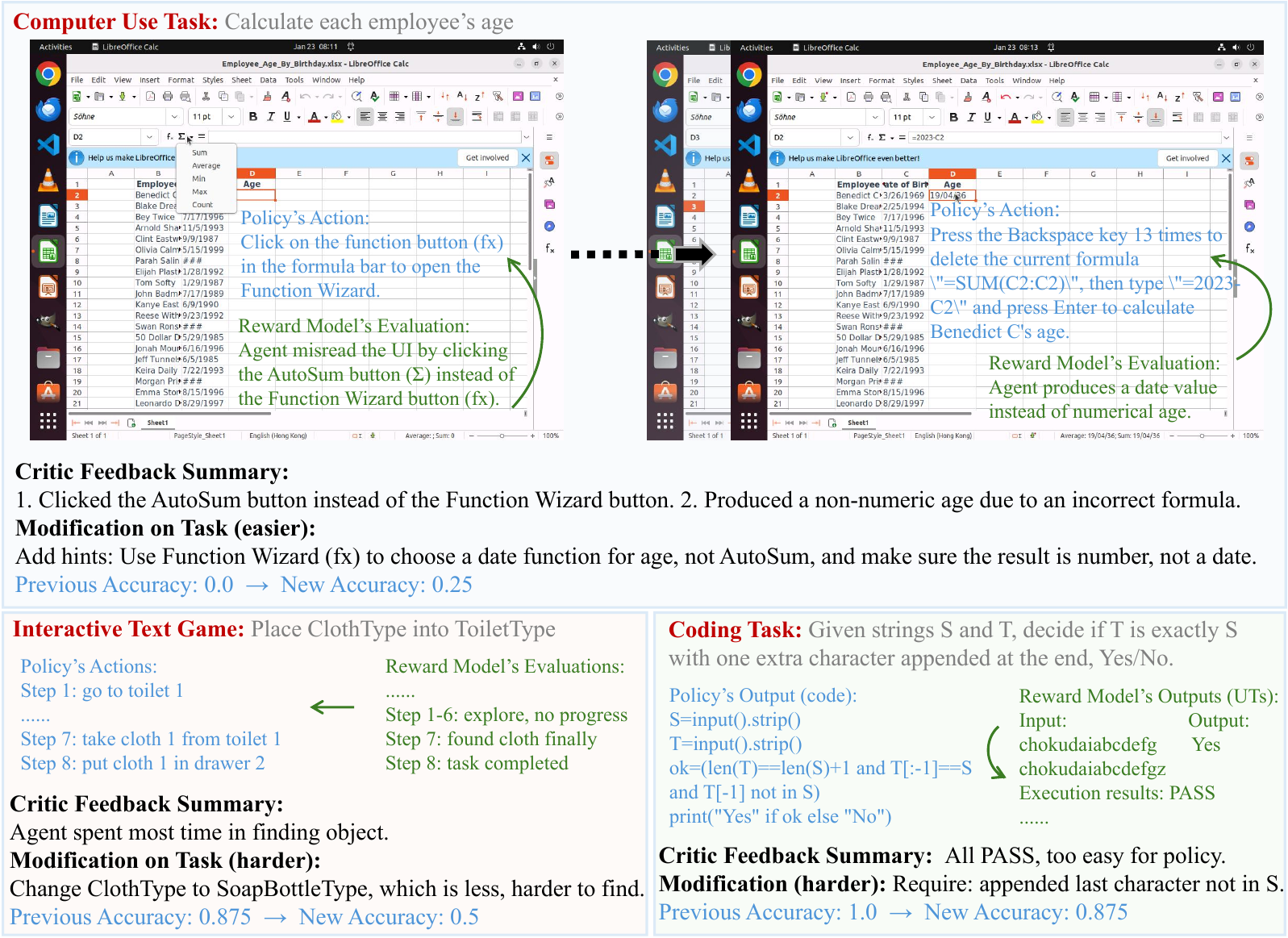}
  \caption{Examples of environment task adaptation based on critic feedback across computer use agent, text-game agent, and coding LLM in our experiments. The critic feedback is summarized from the reward model’s evaluations and is used to automatically adapt tasks.}
  \label{taskexample}
\end{figure*}

Our framework (see Algorithm~\ref{alg:coevolve}) tightly couples the policy model, reward model, and environment to achieve joint optimization. Specifically, we train the policy using integrated feedback (Equation~\ref{policyreward} in Section~\ref{subpolicy}), which combines step-wise signals from reward model with the trajectory-level outcome signal. We train the reward model by treating the policy’s trajectories as environment tasks and assigning consistency feedback via Equation~\ref{prmreward}; we will prove that this objective also improves the reward model’s accuracy in predicting final outcomes in Section~\ref{subenv}. As the reward model becomes more accurate, it in turn provides a stronger and more informative learning signal for the policy. Motivated by the theoretical insights in Section~\ref{subenv}, we show that adapting the difficulty of environment tasks benefits not only policy training but also reward model training. To achieve automatic and targeted adaptation, concrete task modifications are guided by critic feedback summarized from the reward model’s evaluative responses (Section~\ref{spesubenv}). Overall, RLAnything enables the environment, policy, and reward model to provide feedback to one another, strengthening learning signals and improving the system as a whole.

\subsection{Integration Feedback for Policy}
\label{subpolicy}

Given a task $q \in Q$ and a policy $\pi$, we sample a trajectory $\tau \sim \pi(\cdot \mid q)$ and obtain a final outcome reward $O_{\tau} \in \{-1, 1\}$. For the $i$-th step $\tau_i$, we query a reward model $m$ independent times, yielding $S_{\tau_i,j} \in \{-1, 1\}$ for $j=1,\dots,m$, where $-1$ indicates no progress toward the final goal or a step-wise error. We then define the step reward as
\begin{align}
\label{policyreward}
R_{\tau_i} \;=\; O_{\tau} \;+\; \frac{\lambda}{m}\sum_{j=1}^{m} S_{\tau_i,j},
\end{align}
which combines the outcome signal with nuanced step-wise feedback; we set $\lambda = 1$ by default. Finally, we compute advantages by standardizing rewards across trajectories at the same step index $i$, i.e., over the set $\{R_{\tau_i} : \tau \sim \pi(\cdot \mid q)\}$.

\subsection{Consistency Feedback for Reward Model}
\label{subprm}

For the $i$-th step of a trajectory, $\tau_i$, the $j$-th evaluation (out of $m$ in total) by the reward model $r_{\phi}$ assigns a step-level label $S_{\tau_i, j}$ and receives the following reward signal:
\begin{align}
\label{prmreward}
R_{S_{\tau_i, j}} \;=\; R_{\tau_i} \cdot S_{\tau_i, j}.
\end{align}
Intuitively, $R_{\tau_i} \in [-1,1]$ reflects the overall quality of step $\tau_i$: $R_{\tau_i} < 0$ indicates poor quality, while $R_{\tau_i} > 0$ indicates good quality. Values of $R_{\tau_i}$ closer to $0$ indicate greater uncertainty about this step. The agreement between this step-level signal and the $j$-th evaluation is captured by $R_{\tau_i} \cdot S_{\tau_i,j}$, which we use as the supervision signal for that evaluation.
During policy optimization, the policy’s trajectories also serve as the training environment for the reward model. Meanwhile, the refined reward model provides a stronger reward signal for the policy and more accurate feedback to guide environment task adaptation, which in turn facilitates training of both the policy and the reward model. In Section~\ref{subenv}, we prove that optimizing this objective for reward model improves its accuracy in predicting future outcome, and our environment adaptation further facilitates this optimization.

\begin{algorithm}[t]
\caption{RLAnything Pipeline Overview}
\label{alg:coevolve}
\begin{algorithmic}[1]
\STATE \textbf{Given:} environment task set $Q$; policy $\pi_{\theta}$; reward model $r_{\phi}$; thresholds $\alpha_{\text{high}} = 0.8,\alpha_{\text{low}} = 0.2$.
\FOR{$k=1,\ldots,K$} 
  \STATE \textbf{Sampling:}
  \STATE Sample task $q \sim Q$.
  \STATE $\pi_{\theta}$ samples trajectories $\mathbb{T}_q$, each $\tau \in \mathbb{T}_q$, $\tau=(\tau_1,\ldots,\tau_T)$. Outcome $O_{\tau}\in\{-1,1\}$.
  \STATE Reward model samples reasoning $r_{\tau_i,j}$ and final score $S_{\tau_i,j}\in\{-1,1\}$ for $\tau_i$, $1\le j\le m$.
  \STATE Compute step-wise quality for each step $\tau_i$, $R_{\tau_i}=O_{\tau}+ \lambda \sum_{j=1}^m S_{\tau_i,j}/m$, for $\lambda > 0$, $i=1,\ldots,T$.
  \STATE \textbf{Update policy $\pi_{\theta}$:}
  \STATE Compute advantages $A^{\pi_{\theta}}_{\tau_i}$ by standardize $ R_{\tau_i}$ across $\tau \in \mathbb{T}_q$ at same $i$, for each $q$.
  \STATE Train $\pi_{\theta}$ with $A^{\pi_{\theta}}_{\tau_i}$ for action step $\tau_i$.
  \STATE \textbf{Update reward model $r_{\phi}$:}
  \STATE Compute advantages $A^{r_{\phi}}_{\tau_i, j}$ by standardize $R_{\tau_i}\cdot S_{\tau_i,j}$ across $j$, for each $\tau_i$.
  \STATE Train $r_{\phi}$ with $A^{r_{\phi}}_{\tau_i, j}$ for reward reasoning $r_{\tau_i, j}$.
  \STATE \textbf{Adapt environment task:} 
  \STATE $s \leftarrow \text{summarize step-wise errors }(\{(\tau_i, r_{\tau_i,j}\}_{i=1}^{T})$.
  \STATE \textbf{if} $\mathrm{acc}(q)>\alpha_{\text{high}}$:
    \STATE \quad Propose harder task: $q' \leftarrow \mathrm{harder}(q;s)$.
    \STATE \quad \textbf{if} $\alpha_{\text{low}} < \mathrm{acc}(q')<\mathrm{acc}(q)$: 
       replace $q \leftarrow q'$
  \STATE \textbf{else if} $\mathrm{acc}(q)<\alpha_{\text{low}}$:
    \STATE \quad Propose easier task: $q' \leftarrow \mathrm{easier}(q;s)$.
    \STATE \quad \textbf{if} $\mathrm{acc}(q) < \mathrm{acc}(q') < \alpha_{\text{high}}$:
       replace $q \leftarrow q'$ 
\ENDFOR
\end{algorithmic}
\end{algorithm}

\subsection{Adaptation of Environment Benefits Both Policy and Reward Models}
\label{subenv}

The quality of the reward model’s signal depends not only on the logical correctness of an individual step, but also on its ability to predict that step’s future impact. Specifically, we want to optimize the following reward precision
\[
\mathcal{A} = P(S_{\tau^{+}_i} > S_{\tau^{-}_i} \mid O_{\tau^{+}} = 1, O_{\tau^{-}} = -1),
\]
where $S_{\tau_i} = \sum_{j=1}^m S_{\tau_i, j} / m$ is the mean process reward assigned by $r_{\phi}$. In the following theorem, we show that this reward precision can be translated into an objective that can be approximated by our reward design: $\mu \triangleq p_{+} + p_{-}$, where $p_{+} = P(S_{\tau^{+}_i, j} = 1)$ and $p_{-} = P(S_{\tau^{-}_i, j} = -1)$.

\begin{theorem}
\label{thm1}
$\mathcal{A} \to 1$ as $m \to \infty$ if and only if $\mu > 1$. Moreover, $\mathcal{A} \ge 1 - e^{-m (\mu -  1)^2 / 4}$ when $\mu > 1$.
\end{theorem}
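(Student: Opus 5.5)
We model the $2m$ evaluations as independent: $S_{\tau^+_i,j}$ are i.i.d.\ with $P(S_{\tau^+_i,j}=1)=p_+$, and $S_{\tau^-_i,j}$ are i.i.d.\ with $P(S_{\tau^-_i,j}=-1)=p_-$, the two families being independent of each other, all conditional on $O_{\tau^+}=1$, $O_{\tau^-}=-1$. The plan is to reduce $\mathcal{A}$ to a tail probability of a sum of i.i.d.\ bounded variables. Define
\[
D_j \;=\; S_{\tau^+_i,j}-S_{\tau^-_i,j}\in\{-2,0,2\}, \qquad j=1,\dots,m.
\]
Then $S_{\tau^+_i}>S_{\tau^-_i}$ holds exactly when $\sum_{j=1}^m D_j>0$, so $\mathcal{A}=P\bigl(\sum_j D_j>0\bigr)$.

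The mean of $D_j$ is
\[
\mathbb{E}[D_j]=(2p_+-1)-(1-2p_-)=2(\mu-1).
\]
So the sign of the drift is exactly the sign of $\mu-1$, which is why $\mu$ is the natural quantity.

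\textbf{Quantitative bound.} Suppose $\mu>1$. Each $D_j$ lies in $[-2,2]$, so Hoeffding's inequality for the centered sum gives
\[
1-\mathcal{A}=P\Bigl(\textstyle\sum_j D_j\le 0\Bigr)\le P\Bigl(\textstyle\sum_j (D_j-\mathbb{E}D_j)\le -2m(\mu-1)\Bigr)\le \exp\Bigl(-\tfrac{2(2m(\mu-1))^2}{m\cdot 4^2}\Bigr)=e^{-m(\mu-1)^2/2}.
\]
This is at most $e^{-m(\mu-1)^2/4}$, so the claimed bound follows. As a consequence, $\mathcal{A}\to 1$ as $m\to\infty$ whenever $\mu>1$.

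\textbf{Converse.} Suppose $\mu\le 1$; we must show $\mathcal{A}\not\to 1$.
\begin{itemize}
\item If $\mu<1$, the drift is negative. The weak law of large numbers (or Hoeffding in the other direction) gives $P\bigl(\sum_j D_j>0\bigr)\to 0$.
\item If $\mu=1$, the drift is zero. If $\operatorname{Var}(D_j)>0$, the central limit theorem gives $P\bigl(\sum_j D_j>0\bigr)\to 1/2$. Note that $P(\sum_j D_j=0)\to 0$ in this case, by the local limit theorem or a simple anti-concentration bound for lattice variables.
\item If $\mu=1$ and $\operatorname{Var}(D_j)=0$, then $D_j\equiv 0$ almost surely, so $\mathcal{A}=0$.
\end{itemize}
In every case $\mathcal{A}$ stays bounded away from $1$ in the limit, which establishes the ``only if'' direction.

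\textbf{Main obstacle.} The only delicate case is the boundary $\mu=1$. There the ties $\sum_j D_j=0$ and the variance degeneracies must be handled carefully. I would treat the degenerate sub-cases explicitly, as above, and then invoke the CLT for the remaining non-degenerate case.
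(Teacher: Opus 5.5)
Your proposal is correct and follows essentially the same route as the paper. Both use the differences $X_j=S_{\tau_i^+,j}-S_{\tau_i^-,j}$ with mean $2(\mu-1)$, a law-of-large-numbers/CLT trichotomy for the equivalence, and Hoeffding's inequality for the quantitative bound. Your version is slightly sharper and more careful in two places. First, Hoeffding actually gives the exponent $m(\mu-1)^2/2$; the paper's $/4$ is just a looser constant. Second, you correctly treat the degenerate case $\mu=1$ with zero variance (e.g.\ $p_+=1$, $p_-=0$), where $\mathcal{A}=0$ rather than the $1/2$ claimed by the paper's CLT-symmetry remark.
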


The target $\mu = p_{+} + p_{-}$ indicates that the sampling densities used to estimate $p_{+}$ and $p_{-}$ should be balanced rather than heavily skewed toward one side; otherwise, the estimator can be dominated by a single class, leading to biased evaluation. However, this balance can break when the training environment for the reward model, induced by the policy’s trajectories, is not balanced in task difficulty. We use the following theorem to formalize this result.

\begin{theorem}
\label{thm2}
The left-hand side is the RL objective for the reward model.
When $\lambda = 1$, on the right-hand side, $f_{+} \ge 0$ and $f_{-} \ge 0$ are importance-weight functions of $\tau$, and $\langle \cdot, \cdot \rangle$ denotes the $L^2$ inner product over $\tau \sim \pi_{\theta}(\cdot \mid q)$.
\begin{align*}
&\mathbb{E}_{\substack{q \sim Q\\ \tau \sim \pi_{\theta}(\cdot \mid q)}} \, \mathbb{E}_{S_{\tau_i, j} \sim r_{\phi}(\cdot \mid \tau_i)} \big[ R_{S_{\tau_i, j}} \big] =  4 \,\mathbb{E}_{q \sim Q} \big[ \langle p_{+}, f_{+} \rangle + \langle p_{-}, f_{-} \rangle \big] + C,
\end{align*}
where the importance-weight norm ratio $\|f_{+}\| / \|f_{-}\| \to 0$ as $P(O_{\tau} = -1 \mid q, \pi_{\theta}) \to 1$, and $\|f_{+}\| / \|f_{-}\| \to \infty$ as $P(O_{\tau} = 1 \mid q, \pi_{\theta}) \to 1$. $C$ is constant irrelevant to $\phi$, and $\|\cdot\|$ is $L^2$ norm.
\end{theorem}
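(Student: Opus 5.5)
The plan is to expand the reward-model objective by conditioning on the outcome of each trajectory. With $\lambda = 1$, Equation~\ref{prmreward} gives $R_{S_{\tau_i,j}} = O_\tau S_{\tau_i,j} + S_{\tau_i}S_{\tau_i,j}$, where $S_{\tau_i} = \frac1m\sum_{k} S_{\tau_i,k}$. The evaluations $S_{\tau_i,k}$ are i.i.d.\ draws from $r_\phi(\cdot\mid\tau_i)$.

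First, fix $q$ and $\tau$, and write $s(\tau) = P(S_{\tau_i,j}=1\mid\tau_i)$. Then $\mathbb{E}[S_{\tau_i,j}] = 2s-1$. Independence across $k\neq j$ and $S_{\tau_i,j}^2=1$ give
\[
\mathbb{E}[S_{\tau_i}S_{\tau_i,j}] = \tfrac1m + \tfrac{m-1}{m}(2s-1)^2 .
\]

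Second, split the expectation over $\tau\sim\pi_\theta(\cdot\mid q)$ according to $O_\tau=1$ versus $O_\tau=-1$. On the positive part, $p_+(\tau)=s(\tau)$. On the negative part, $p_-(\tau)=1-s(\tau)$. The linear term becomes
\[
\mathbb{E}[O_\tau(2s-1)] = \mathbb{E}\big[\mathbf 1_{O=1}(2p_+-1)\big] + \mathbb{E}\big[\mathbf 1_{O=-1}(2p_--1)\big].
\]
This equals $2\langle p_+,\mathbf 1_{O=1}\rangle + 2\langle p_-,\mathbf 1_{O=-1}\rangle$ plus a constant, where the inner product is against the density of $\tau$ given $q$.

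For the quadratic term I use $(2s-1)^2 = (2p_\pm-1)^2$, which is $\phi$-dependent. I would handle it as follows:
\[
(2p-1)^2 = 4p^2-4p+1 = 4p\,(p-1)+1 .
\]
This is linear in $p$ once the factor $p-1$ is absorbed into a $\tau$-dependent weight. Alternatively, I could note that the paper's stated form allows the weights to depend on $\tau$, and treat $f_\pm$ as the combined coefficient, with $p_\pm$ evaluated at the current $\phi$, in the policy-gradient sense. Collecting terms then gives the total as $4\big(\langle p_+,f_+\rangle+\langle p_-,f_-\rangle\big)+C$, with
\[
f_\pm(\tau) = \tfrac12\,\mathbf 1_{O_\tau=\pm1} \times(\text{a bounded nonnegative factor}).
\]
Nonnegativity of $f_\pm$ must be checked. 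I would instead define $f_\pm$ as an importance weight that is a function of $\tau$ only, namely $f_\pm(\tau) = \mathbf 1_{O_\tau=\pm1}\cdot w(\tau)$, with $w\ge0$. The linear term alone already gives $w=\tfrac12$. This is the step I expect to be the main obstacle: making the quadratic self-consistency term fit the claimed linear-in-$p_\pm$ form with nonnegative weights. My first attempt would be to pick $w(\tau) = \tfrac12 + \tfrac{m-1}{m}\,(2s(\tau)-1)$-type corrections on each side, bounding them to be nonnegative using $s\in[0,1]$. Failing that, I would present the identity at the level of the policy gradient, where the constant $C$ absorbs terms not differentiated with respect to $\phi$.

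Finally, the norm ratio. Since $f_+$ is supported on $\{O_\tau=1\}$ with bounded weights,
\[
\|f_+\|^2 \le c\,P(O_\tau=1\mid q,\pi_\theta).
\]
Since $f_-$ is supported on $\{O_\tau=-1\}$ with weights bounded below by a positive constant (e.g.\ $\tfrac12$ from the linear term), $\|f_-\|^2 \ge c' P(O_\tau=-1\mid q,\pi_\theta)$. Hence $\|f_+\|/\|f_-\|\to0$ as $P(O_\tau=-1)\to1$. The symmetric argument, using a lower bound on $f_+$ and an upper bound on $f_-$, gives $\|f_+\|/\|f_-\|\to\infty$ as $P(O_\tau=1)\to1$.
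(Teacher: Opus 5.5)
There is a genuine gap, and it sits at the step you flag as the main obstacle. You model all $m$ evaluations entering $R_{\tau_i}$ as draws from the same $r_\phi$ that is being optimized, and you take the joint expectation. Your formula $\mathbb{E}[S_{\tau_i}S_{\tau_i,j}]=\tfrac1m+\tfrac{m-1}{m}(2s-1)^2$ is correct. But it shows that in this model the objective contains the term $\tfrac{m-1}{m}(2p_\pm-1)^2$, which is genuinely quadratic in $p_\pm$ for $m\ge 2$. So no $\phi$-independent weights $f_\pm$ and constant $C$ can make the stated identity hold. Your proposed repairs do not close this:
\begin{itemize}
\item Absorbing $p-1$ into the weight makes $f_\pm$ depend on $\phi$ and can make it negative.
\item Your candidate weight $\tfrac12+\tfrac{m-1}{m}(2p_\pm-1)$ is just the derivative of this quadratic objective. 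It is negative when $m\ge 3$ and $p_\pm$ is near $0$, so nonnegativity cannot be derived from $s\in[0,1]$.
\end{itemize}

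The missing idea, which is how the paper proceeds, is to treat $R_{\tau_i}$ as a fixed reward produced by the current (old) reward model $r_{\mathrm{old}}$. It is not differentiated in $\phi$, and its self-consistency part is replaced by its expectation, $\tfrac1m\sum_l S_{\tau_i,l}\to\mathbb{E}_{r_{\mathrm{old}}}[S_{\tau_i,l}]$. Conditioning on the outcome then gives $R_{\tau_i}=2p_{\mathrm{old}}(S_{\tau_i^+,l}=1)$ when $O_\tau=1$, and $R_{\tau_i}=-2p_{\mathrm{old}}(S_{\tau_i^-,l}=-1)$ when $O_\tau=-1$. Multiplying by $\mathbb{E}_{r_\phi}[S_{\tau_i,j}]$, which is $2p_+-1$ resp.\ $1-2p_-$, yields the identity exactly. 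The weights are $f_\pm=p_{\mathrm{old}}(S_{\tau_i^\pm,l}=\pm1)\,\mathbf{1}(O_\tau=\pm1)$ and the constant is $C=-2\,\mathbb{E}_{q,\tau}[f_++f_-]$. Nonnegativity is then automatic, since the weights are probabilities, and no bounding argument is needed. Your ``policy-gradient'' fallback gestures at this, but it never freezes the self-consistency term at $r_{\mathrm{old}}$, so it never produces concrete $f_\pm$.

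Note that the resulting weight is $p^\pm_{\mathrm{old}}=\tfrac12+\tfrac12(2p^\pm_{\mathrm{old}}-1)$. The self-consistency contribution can therefore pull it below the $\tfrac12$ coming from the linear term. Consequently your norm-ratio step, which relies on a lower bound of $\tfrac12$ from the linear term, does not go through as written. You need the weight on the dominant outcome to be bounded away from zero: $p^-_{\mathrm{old}}$ when failures dominate, $p^+_{\mathrm{old}}$ when successes dominate. The better-than-random condition $p_{\mathrm{old}}\ge\tfrac12$ discussed in Remark 1 of the paper's appendix would suffice. The paper itself asserts these limits without stating such a condition, so you should make the assumption explicit.
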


This demonstrates that when a task $q$ is overly difficult, i.e., $P(O_{\tau} = -1 \mid q, \pi_{\theta}) \to 1$, or overly easy for the policy model, i.e., $P(O_{\tau} = 1 \mid q, \pi_{\theta}) \to 1$, importance sampling becomes extremely unbalanced between $p_{+}$ and $p_{-}$, violating the reward-precision target $p_{+} + p_{-}$ established in Theorem~\ref{thm1}. Given this insight, in our reward system, moderating the difficulty of task $q$ can not only facilitate policy training but also improve training of the process reward model. We will introduce how to adapt task automatically in the following Section~\ref{spesubenv}.

\subsection{Critic Feedback for Environment Tasks}
\label{spesubenv}

In our framework, we estimate task difficulty using the policy’s rollout accuracy. When the accuracy falls outside predefined thresholds ($\alpha_{\text{low}}$ and $\alpha_{\text{high}}$), we prompt a language model to modify the task to make it harder or easier while preserving the essence of the original task (see prompts in Appendix~\ref{envprompt}). The concrete modifications are guided by summarized evaluative traces from the reward model, $r_{\tau_i, j}$, which are generated when obtaining $S_{\tau_i, j}$. We summarize only steps $\tau_i$ that exhibit potential failures, namely those for which $S_{\tau_i, j} = -1$ for some $j$, to capture the policy’s likely error patterns (Appendix~\ref{summaryprompt}). Task adaptation therefore hinges on accurate critic feedback and, in return, yields more effective adaptation that benefits both the policy and the reward model.
We also enforce quality control for modified tasks. If the goal is to make the original task $q$ harder, we accept the modified task $q'$ only if $\alpha_{\text{low}} < \mathrm{acc}(q') < \mathrm{acc}(q)$; if the goal is to make it easier, we accept $q'$ only when $\mathrm{acc}(q) < \mathrm{acc}(q') < \alpha_{\text{high}}$. This helps ensure the validity of the new task and the effectiveness of adaptation (Algorithm~\ref{alg:coevolve}). We then replace the original task with the accepted task $q'$ in task set $Q$.

\section{Experiments}

In this section, we demonstrate our takeaways (Figure~\ref{motivation}) and evaluate the performance of the optimized models through extensive experiments. Specifically, we focus on two real-world agentic settings: computer-use agents and text-based interactive games, where large language models serve as both the policy and reward models, and we perform automatic environment adaptation. In addition, we also demonstrate the effectiveness of our framework for RLVR coding tasks, where no interactive environment is available.

\subsection{Experiment Settings}

\subsubsection{Models and Optimizations}

For GUI agents on OSWorld \citep{xie2024osworld}, we use Qwen3-VL-8B-Thinking as both the policy and reward model. We set the maximum number of interaction steps to 50 for evaluation and 30 for RL rollouts. At each RL step, we sample 12 tasks, each with 8 independent rollout trajectories. For the reward model, we perform 3 evaluations per policy response. We use Qwen3-4B \citep{yang2025qwen3} for task adaptation.
For the LLM agent on AlfWorld \citep{shridhar2020alfworld, cote2018textworld}, we use Qwen2.5-7B-Instruct as the policy model and Qwen2.5-14B-Instruct as the reward model \citep{yang2024qwen2}, and we use Qwen3-4B \citep{yang2025qwen3} for task adaptation. We set the maximum number of steps to 60 for evaluation and 40 for RL rollouts. At each RL step, we sample 16 tasks, each with 8 independent rollouts.
For coding LLMs, we use the same model combination as in the AlfWorld setting. At each RL step, we sample 64 tasks, with 32 independent code-solution generations and 32 independent unit-test generations per task. 

\subsubsection{Training and Evaluation Datasets}
\label{detailtraindata}

In each setting, we use separate training and test datasets.
For the GUI agent, we split OSWorld-verified such that the training set excludes the Multiple Apps and Chrome task categories, which we treat as out-of-distribution (OOD) tasks in the final evaluation. As a result, we evaluate on 230 in-domain tasks and 139 OOD tasks.
For AlfWorld, we follow the official setting: tasks are split into 3.5k training tasks, 140 in-domain evaluation tasks, and 134 OOD evaluation tasks.
For coding LLMs, we use LiveCodeBench-V2 \citep{jain2024livecodebench}, CodeContests \citep{li2022alphacode}, and LiveBench \citep{white2024livebench} for evaluation, and CodeContests \citep{li2022alphacode} for training. Specifically, for CodeContests, we extract tasks with difficulty level $\le 2$ and randomly split them into a training set of 4.5k examples and an evaluation set of 200 examples.

\begin{table*}[t!]
\centering
\caption{Each dynamic component consistently improves policy and reward model optimization across GUI agent, LLM agent, and coding LLM settings. ``Policy + Reward + Env'' denotes RLAnything, which keeps policy, reward model, and environment dynamic during training. ``Policy + Reward'' jointly optimize reward model but lacks environment adaptation. ``Policy'' uses integrated feedback with a fixed reward model and environment. ``In Domain'' denotes accuracy on in-domain tasks, while ``OOD'' denotes accuracy on out-of-distribution tasks. ``Process Acc'' denotes the accuracy of reward model in evaluating step-wise correctness, while ``Outcome Acc'' denotes the accuracy of predicting outcomes using step-wise rewards. We evaluate on three datasets in coding setting. ``Code'' denotes coding accuracy. ``UT'' denotes the accuracy of generated unit tests. ``Detect'' denotes the accuracy of unit tests in identifying code correctness.}
\label{tab:finaleval}

\begin{adjustbox}{max width=\textwidth}
\begin{tabular}{l|cc|cc|cc|cc}
\specialrule{1.2pt}{0pt}{0pt}
\multirow{3}{*}{\textbf{Dynamic Components}} &
  \multicolumn{4}{|c|}{\textbf{GUI Agent}} &
  \multicolumn{4}{c}{\textbf{LLM Agent}} \\
\cline{2-5}\cline{6-9}
& \multicolumn{2}{|c|}{\textbf{Policy Acc}} & \multicolumn{2}{c|}{\textbf{Reward Acc}}
& \multicolumn{2}{c|}{\textbf{Policy Acc}} & \multicolumn{2}{c}{\textbf{Reward Acc}} \\
\cline{2-3}\cline{4-5}\cline{6-7}\cline{8-9}
& \textbf{In Domain} & \textbf{OOD} & \textbf{Process } & \textbf{Outcome }
& \textbf{In Domain} & \textbf{OOD} & \textbf{Process } & \textbf{Outcome } \\
\hline\hline
Before Optimization             & \numcell{40.4} & \numcell{16.1} & \numcell{86.0}  & \numcell{55.2} & \numcell{39.0} & \numcell{44.9} & \numcell{47.0}     & \numcell{59.6} \\
Policy                & \numcell{48.3} & \numcell{19.8} & \numcell{86.0}  & \numcell{55.2} & \numcell{51.1} & \numcell{59.3} & \numcell{47.0}     & \numcell{59.6} \\
Policy + Reward       & \numcell{49.6} & \numcell{20.0} & \numcell{88.7}  & \numcell{57.6} & \numcell{55.6} & \numcell{61.8} & \numcell{52.4}     & \numcell{61.1} \\
\rowcolor[gray]{.9}
Policy + Reward + Env & \numcell{\textbf{52.1}}\posdelta{11.7} & \numcell{\textbf{21.3}}\posdelta{5.2} & \numcell{\textbf{91.3}}\posdelta{5.3}  & \numcell{\textbf{60.4}}\posdelta{5.2} & \numcell{\textbf{60.2}}\posdelta{21.2} & \numcell{\textbf{63.6}}\posdelta{18.7} & \numcell{\textbf{56.4}}\posdelta{9.4}     & \numcell{\textbf{62.5}}\posdelta{2.9} \\
\specialrule{1.2pt}{0pt}{0pt}
\end{tabular}
\end{adjustbox}

\par\vspace{0ex}

\begin{adjustbox}{max width=\textwidth}
\begin{tabular}{l|ccc|ccc|ccc}
\specialrule{1.2pt}{0pt}{0pt}
\multirow{2}{*}{\textbf{Dynamic Components}} &
  \multicolumn{3}{c|}{\textbf{LiveBench}} &
  \multicolumn{3}{c|}{\textbf{LiveCodeBench}} &
  \multicolumn{3}{c}{\textbf{CodeContests}} \\
  \cline{2-4}\cline{5-7}\cline{8-10}
& \textbf{Code} & \textbf{UT} & \textbf{Detect}
& \textbf{Code} & \textbf{UT} & \textbf{Detect}
& \textbf{Code} & \textbf{UT} & \textbf{Detect} \\
\hline\hline
Before Optimization               & \numcell{31.3} & \numcell{27.8} & \numcell{19.6} & \numcell{26.9} & \numcell{35.7} & \numcell{28.1} & \numcell{21.2} & \numcell{43.8} & \numcell{35.7} \\
Policy                  & \numcell{38.8} & \numcell{27.8} & \numcell{19.6} & \numcell{31.4} & \numcell{35.7} & \numcell{28.1} & \numcell{25.8} & \numcell{43.8} & \numcell{35.7} \\
Policy + Reward         & \numcell{40.0} & \numcell{73.3} & \numcell{37.9} & \numcell{31.7} & \numcell{81.4} & \numcell{49.0} & \numcell{26.5} & \numcell{86.0} & \numcell{47.2} \\
\rowcolor[gray]{.9}
Policy + Reward + Env   & \numcell{\textbf{43.2}}\posdelta{11.9} & \numcell{\textbf{78.9}}\posdelta{51.1} & \numcell{\textbf{48.5}}\posdelta{28.9} & \numcell{\textbf{34.1}}\posdelta{7.2} & \numcell{\textbf{81.6}}\posdelta{45.9} & \numcell{\textbf{55.2}}\posdelta{27.1} & \numcell{\textbf{27.3}}\posdelta{6.1} & \numcell{\textbf{87.1}}\posdelta{43.3} & \numcell{\textbf{55.5}}\posdelta{19.8} \\
\specialrule{1.2pt}{0pt}{0pt}
\end{tabular}
\end{adjustbox}

\end{table*}

\subsubsection{Reward Modeling}

For reward modeling, we use an LLM as a generative reward model: it is prompted to assess each step’s quality and its potential impact on the final outcome, and then outputs $1$ or $-1$ after reasoning (Appendix~\ref{rewardprompt}). For the GUI agent, we provide as context a summary of previous actions, the two most recent images, and the action to be evaluated between them. For AlfWorld, we summarize prior actions and their observed consequences. In the coding-LLM setting, we use a unit-test generator as the reward model, where each newly generated test evaluates one aspect of the code. Across all our experiments, we ask the reward model to generate 3 independent evaluations for each policy step.

\subsubsection{Environment Task Adaptation}

Before environment adaptation, we first summarize the reward model’s outputs for steps flagged as potentially erroneous, defined as steps with at least one final score of $-1$ (Appendix~\ref{summaryprompt}). We then feed this critic feedback to a language model to rewrite the task according to a target perturbation, making it either easier or harder (Appendix~\ref{envprompt}). In the coding-LLM setting, the critic feedback is simply the code’s evaluation results on the generated unit tests.

In the AlfWorld setting, the environment model rewrites the task using critic feedback and a structured summary of the current environment, including object locations and properties. In the coding setting, the environment model generates a new task along with corresponding unit tests. 
In the GUI setting, beyond using critic feedback to add or remove hints to adjust difficulty, adapting to a different objective requires creating new verifier files. We pre-create additional perturbed versions for 47 of the 230 training tasks (Appendix~\ref{templateexample}); each perturbed version includes its corresponding evaluator and verifier files. These tasks are included in all training setups across our experiments to ensure fair comparison.

\subsection{Results and Insights}

\subsubsection{RLAnything Facilitates Policy Training}

To show that each dynamic component consistently improves policy optimization, we report three training curves in Figure~\ref{fig_rl_curve_band}, where each method adds one additional dynamic component. For intermediate evaluations along the curves, we set the maximum interaction steps to 30 for OSWorld and 60 for AlfWorld. We find that making both the reward model and the environment dynamic yields stronger optimization and a higher convergence point. Specifically, jointly optimizing the reward model improves the supervision signal, which in turn benefits policy training. Moreover, environment adaptation benefits not only the policy but also the reward model (Section~\ref{mainexpreward}), thereby yielding a three-fold gain for policy training. We also report final evaluations on in-domain and out-of-distribution (OOD) tasks in Table~\ref{tab:finaleval}. The substantial improvements on OOD tasks highlight the stronger generalization of our optimization framework.

\begin{figure*}[t]
  \centering
  \begin{subfigure}{0.48\linewidth}
    \centering
    \includegraphics[width=\linewidth]{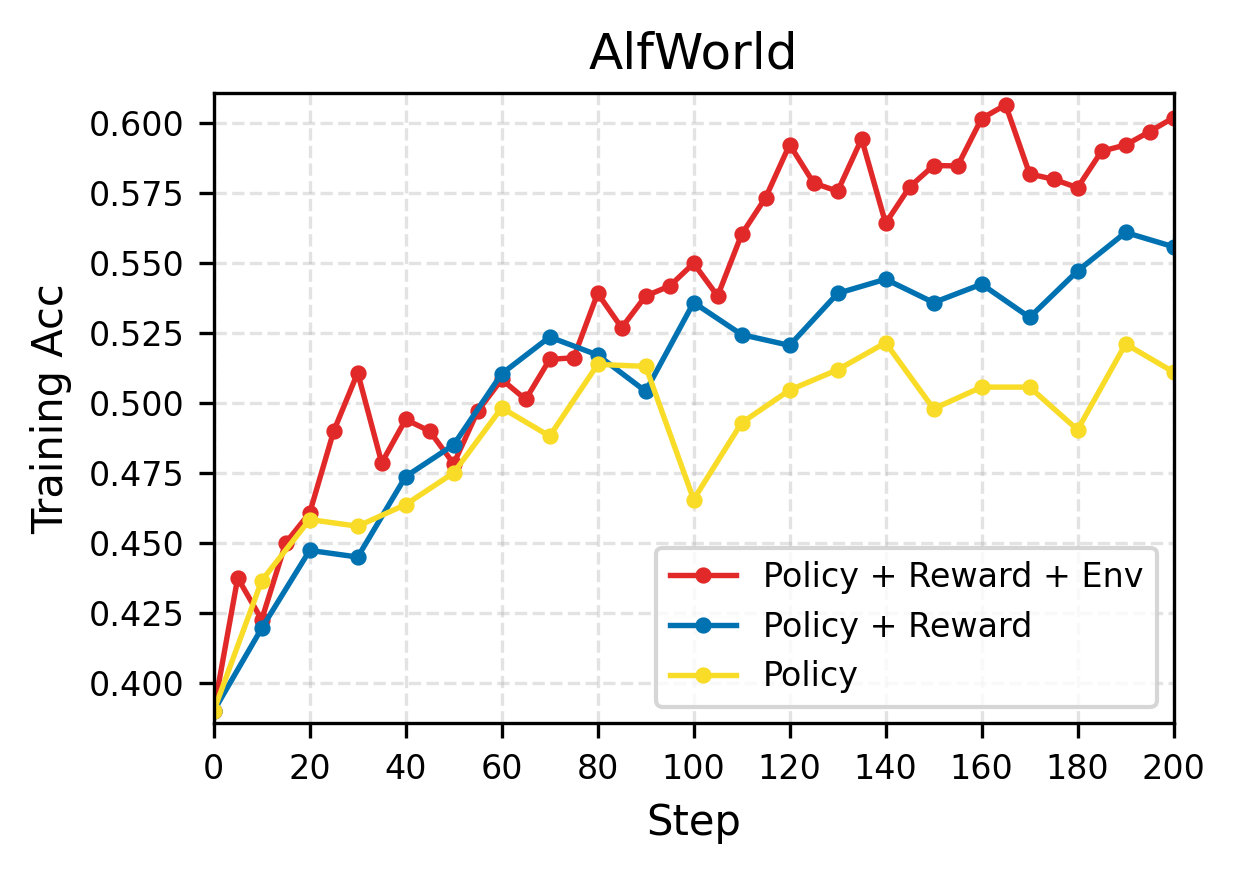}
  \end{subfigure}\hfill
  \begin{subfigure}{0.48\linewidth}
    \centering
    \includegraphics[width=\linewidth]{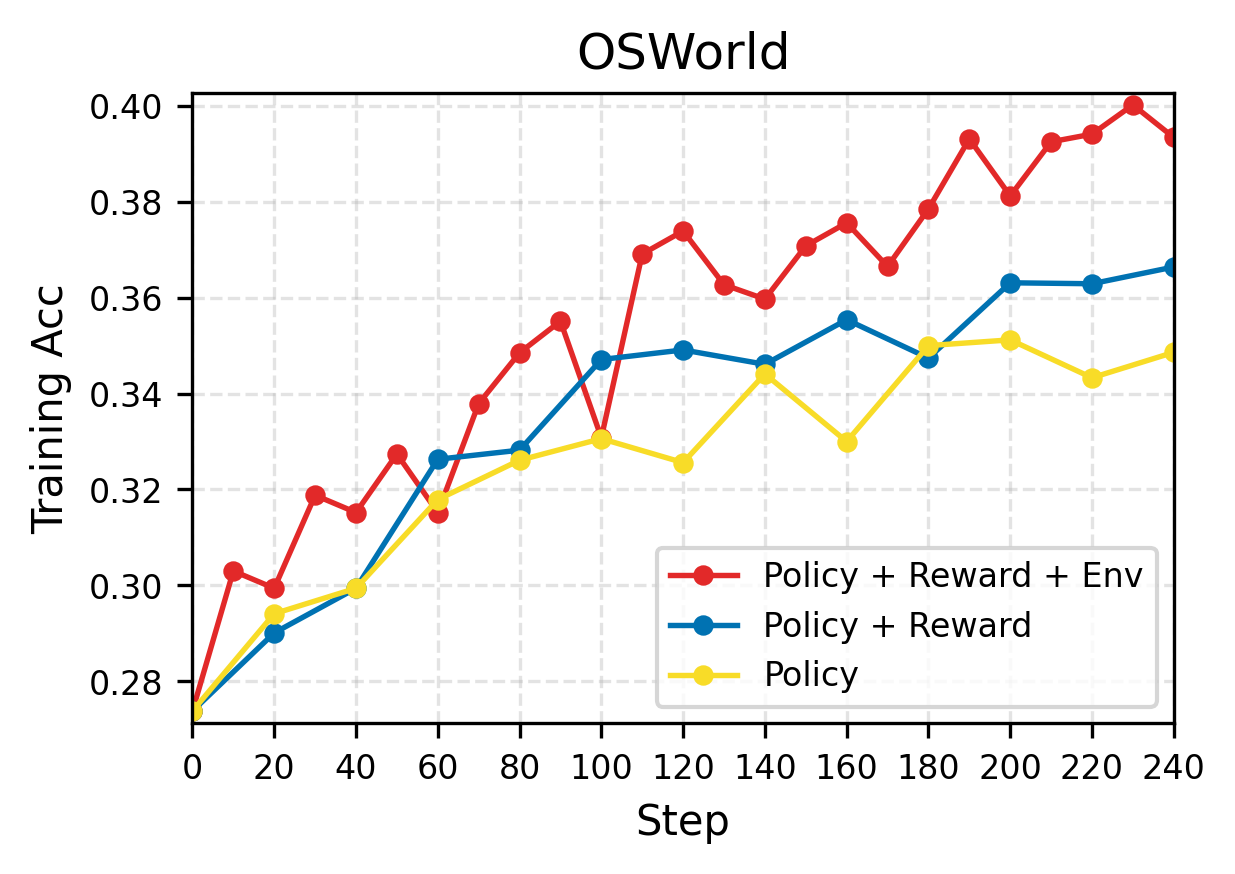}
  \end{subfigure}
  \caption{Each dynamic component consistently improves policy's training curve across LLM agent and GUI agent settings. }
  \label{fig_rl_curve_band}
\end{figure*}

\subsubsection{RLAnything Produces a Stronger Reward Model}
\label{mainexpreward}

From Table~\ref{tab:finaleval}, we draw two conclusions. First, our reward design (Equation~\ref{prmreward}) effectively improves the reward model. Second, adapting environment tasks further facilitates reward model training, supporting our theoretical results.
To evaluate improvements in the reward model, we consider two aspects: (i) its ability to assess step-wise quality (\emph{process accuracy}), and (ii) its ability to predict a step’s influence on the final outcome (\emph{outcome accuracy}). The ground truth for outcome accuracy comes from verifiable outcomes, while step-wise quality labels are obtained by majority voting from a stronger reasoning model prompted to assess step-level quality; details are provided in Appendix~\ref{evalreward}. From Table~\ref{tab:finaleval}, we find that both accuracy metrics improve across all settings after optimization, and that environment adaptation further enhances them. We also conduct ablation studies using different supervision models in Appendix~\ref{ablrewardeval}, which show similar results.

\begin{figure*}[t!]
  \centering
  \includegraphics[width=0.95\linewidth]{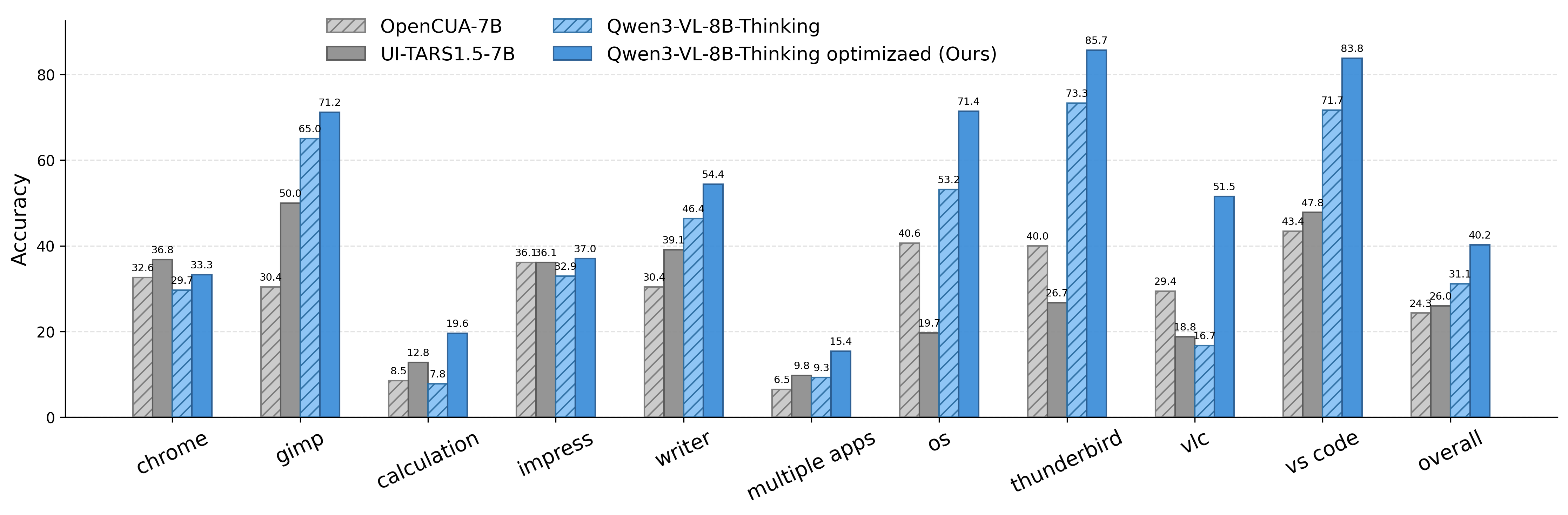}
  \caption{Results on OSWorld tasks for different models, including UI-TARS1.5-7B \citep{qin2025ui}, OpenCUA-7B \citep{wang2025opencua}, Qwen3-VL-8B-Thinking \citep{Qwen3-VL}, and our optimized model. Results are averaged over three independent runs, with the maximum number of interaction steps set to 50.}
  \label{figosworld}
\end{figure*}

\begin{figure*}[t]
  \centering
  \includegraphics[width=0.95\linewidth]{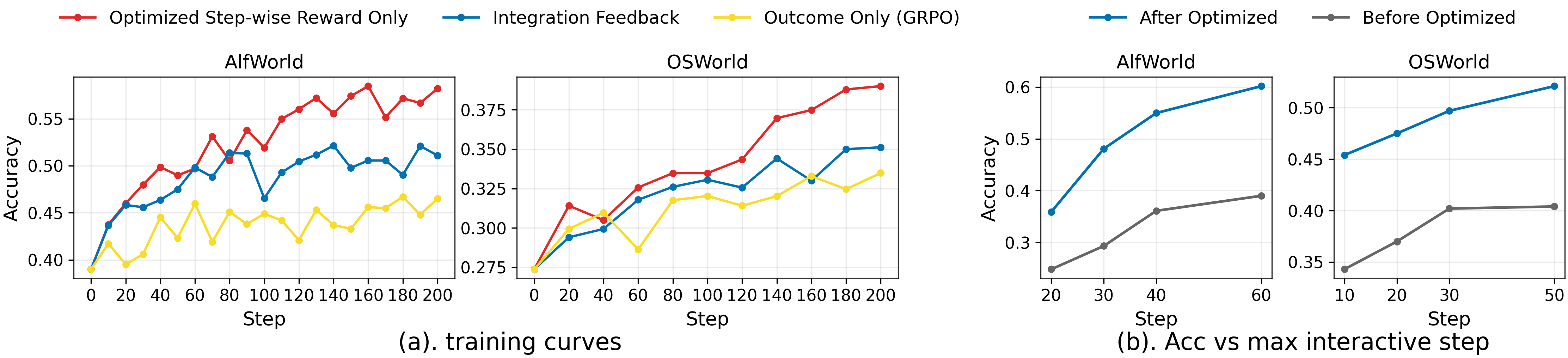}
  \caption{(a) shows the need for an integrated reward and that our optimized reward model alone provides a stronger learning signal than outcome supervision (the standard GRPO \citep{deepseekmath} setting). (b) shows scaling with number of interaction steps.}
  \label{fig3}
\end{figure*}

\subsubsection{Adaptation of Environments Enables Active Learning from Experience}

Our environment adaptation is automated and explicitly guided by critic feedback from the reward model, which diagnose the policy’s likely errors on a given task. In this section, we provide examples showing how this targeted adaptation promotes more active policy learning.
In the example in Figure~\ref{taskexample}, the GUI agent fails to obtain any successful trajectories across independent rollouts. The reward model pinpoints two specific mistakes made by the policy on this task, and its outputs serve as diagnostic feedback for rewriting the task. The adapted prompt adds targeted tips, enabling the policy to achieve successful rollouts and learn more effectively, rather than relying on random exploration. Moreover, tasks can also be adapted in the opposite direction to encourage more challenging exploration.
In the interactive text-game example, the policy succeeds across trajectories but spends most steps searching for the object; the model therefore increases difficulty by replacing the target object with another that appears less frequently. See additional examples in Appendix~\ref{appexample}.

\subsubsection{State-of-the-Art Performance of the Optimized Multimodal GUI Agent}

We further scale RLAnything optimization for the GUI agent (Figure~\ref{agenticode}, left) and compare it with open-source baselines, including UI-TARS1.5-7B \citep{qin2025ui}, OpenCUA-7B \citep{wang2025opencua}, and Qwen3-VL-8B-Thinking \citep{Qwen3-VL}. As shown in Figure~\ref{figosworld}, the optimized model achieves the significant performance across all OSWorld task categories, highlighting the effectiveness of our optimization framework. Notably, our optimized model improves accuracy by $9.1\%$ on OSWorld. On out-of distribution tasks, the model also improves by $5.2\%$.

\subsubsection{Advantages of Integrating Step-wise and Outcome Rewards for Policy Training}

In complex real-world settings where policies must interact with environments over long trajectories to achieve sufficient exploration (see Figure~\ref{fig3}(b)), outcome rewards are too sparse to provide an effective training signal.
We compare the commonly used outcome-only reward with our integrated reward design (Equation~\ref{policyreward}) using the RL training curves in Figure~\ref{fig3} (a) for both the LLM agent and GUI agent settings. The results highlight the necessity of integrated rewards, which combine nuanced step-wise signals with faithful supervision from verifiable final outcomes.

\subsubsection{Optimized Reward Model Supervision Outperforms Human-Labeled Outcome Supervision}

In complex real-world environments such as computer-use tasks, defining verifiable outcomes often requires human effort. In particular, GUI evaluators are typically implemented as human-written evaluation scripts, which limits environment scaling for exploration and training. We propose using only the step-wise signals provided by our optimized reward model, which can evaluate both the current action and its future influence. Specifically, we train the policy using only our optimized reward model for step-wise supervision, without any outcome rewards from evaluator scripts. Surprisingly, this setting even outperforms training with verifiable outcome rewards (see Figure~\ref{fig3} (a)), demonstrating the effectiveness of our framework in improving reward models and its potential to enable large-scale, self-evolving agents in real-world environments such as computers.

\begin{figure*}[t!]
  \centering
  \includegraphics[width=0.95\linewidth]{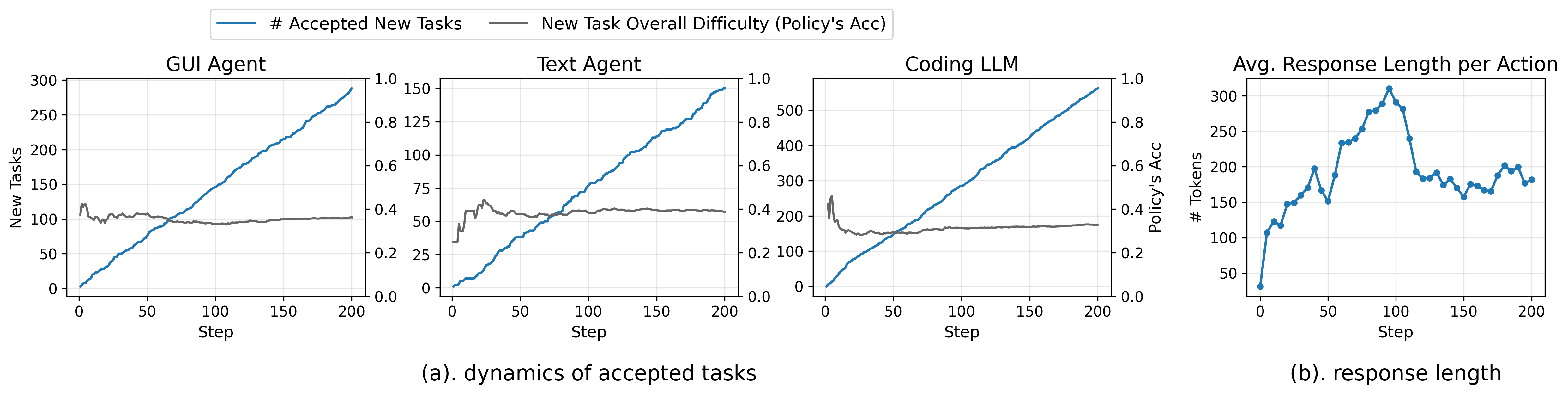}
  \caption{(a). The dynamics of accepted new tasks across three settings, including the number of accepted new tasks and the policy’s accuracy on these tasks over steps. (b). The average response length per action step during AlfWorld training.}
  \label{envdyn}
  \vspace{-3mm}
\end{figure*}

\subsubsection{Also Works for Single-Turn Coding Tasks}
\label{expcodesec}

Beyond interactive settings, RLAnything also applies to RLVR-style coding tasks: policy reward is the code pass rate over the unit tests. We assign rewards to each generated unit test (UT) as follows. We label a generated code as \emph{gt code} if it passes all dataset-provided gt UTs, and label a generated UT as a \emph{gt UT} if it passes on all gt codes. A gt UT receives reward equal to the number of non-gt codes that it causes to fail; otherwise, it receives the negative number of non-gt codes that it incorrectly lets pass. We show this is equivalent to our general framework in Appendix~\ref{appthm2}. To evaluate the reward model, we measure both the correctness of generated UTs and their accuracy in detecting code correctness (Appendix~\ref{evalreward}); overall, Table~\ref{tab:finaleval} shows that both coding performance and unit-test generation quality improve through joint UT training and environment adaptation.

\subsubsection{Trade-off Between Outcome and Self-consistency Supervision in Optimization}
\label{tradeofflambda}

In our integrated reward design $R_{\tau_i}$, supervision from the final outcome $O_{\tau}$ and the aggregated step-wise signal $\lambda \sum_{j=1}^{m} S_{\tau_i,j}/m$ are balanced by the hyperparameter $\lambda$. Beyond its effect on the policy, $\lambda$ also influences reward-model supervision: larger $\lambda$ places more emphasis on step-level quality and less on predicting outcome influence.
We conduct an ablation on $\lambda$ in the AlfWorld setting over 100 RL training steps, evaluating both the policy and the reward model. Specifically, to study the effect on the reward model, we fix $\lambda=1$ in $R_{\tau_i}$ and vary $\lambda$ in $R_{S_{\tau_i,j}}$; to study the effect on the policy, we fix $\lambda=1$ in $R_{S_{\tau_i,j}}$ and vary $\lambda$ in $R_{\tau_i}$. As shown in Table~\ref{abllambda}, $\lambda$ indeed trades off outcome- and self-consistency-based supervision, and policy optimization performs best at $\lambda=1$, which we use by default. Reported numbers are averaged over the last three evaluations on the training curve, each averaged over three independent runs. We also discuss how $\lambda$ affects the theoretical results in Appendix~\ref{appthm2}.

\begin{table}[h]
\caption{Influence of $\lambda$ on optimization for policy and reward model in RLAnything framework.}
\label{abllambda}
\centering
\setlength{\tabcolsep}{6pt}
\renewcommand{\arraystretch}{1.15}
\begin{tabular}{l|l|ccc}
\hline
 & $\lambda$ & 1 / 4 & 1 & 4 \\
\hline
\multirow{2}{*}{Reward Model} & process acc & 51.8 & 55.5 & 57.4 \\
\cline{2-5}
                             & outcome acc & 62.5 & 62.4 & 60.2 \\
\hline
Policy Model                  & acc         & 48.0 & 54.1 & 53.3 \\
\hline
\end{tabular}
\end{table}

\subsubsection{Dynamics of Accepted New Tasks}

In this section, we analyze the tasks accepted during optimization (see Figure~\ref{envdyn} (a)). First, the number of accepted tasks grows approximately linearly with training steps, indicating the potential for environment scaling. We then characterize task difficulty using the policy’s accuracy on these accepted tasks. Accuracy fluctuates early due to limited samples but quickly stabilizes at a moderate level; the converged value is below $0.5$ because the original tasks are mostly challenging for the policy.
Finally, we assess the quality of accepted tasks using a much stronger reasoning model by running 16 independent trials per task and reporting the rate of at least one successful run. We use Qwen3-VL-32B-Thinking for the GUI setting and Qwen3-32B for AlfWorld and the coding setting, obtaining pass-at-least-one rates of $96.0\%$, $96.7\%$, and $94.2\%$, respectively. These results demonstrate the effectiveness of our acceptance mechanism in filtering out incorrect synthetic tasks.

\subsubsection{Application on Agentic Coding}

We evaluate our optimized coding models, which includes both coding policy and unit tester, under several agentic coding methods, including MPSC \citep{huang2024enhancing}, AlphaCodium \citep{ridnik2024code}, $S^{\star}$ \citep{li2025s}, and the Best of N method (Appendix~\ref{agenticcodingapp}). From Figure~\ref{agenticode} (right), we find that the optimized models significantly improve agentic coding performance across methods. 

\begin{figure}[h]
  \centering
  \includegraphics[width=0.8\linewidth]{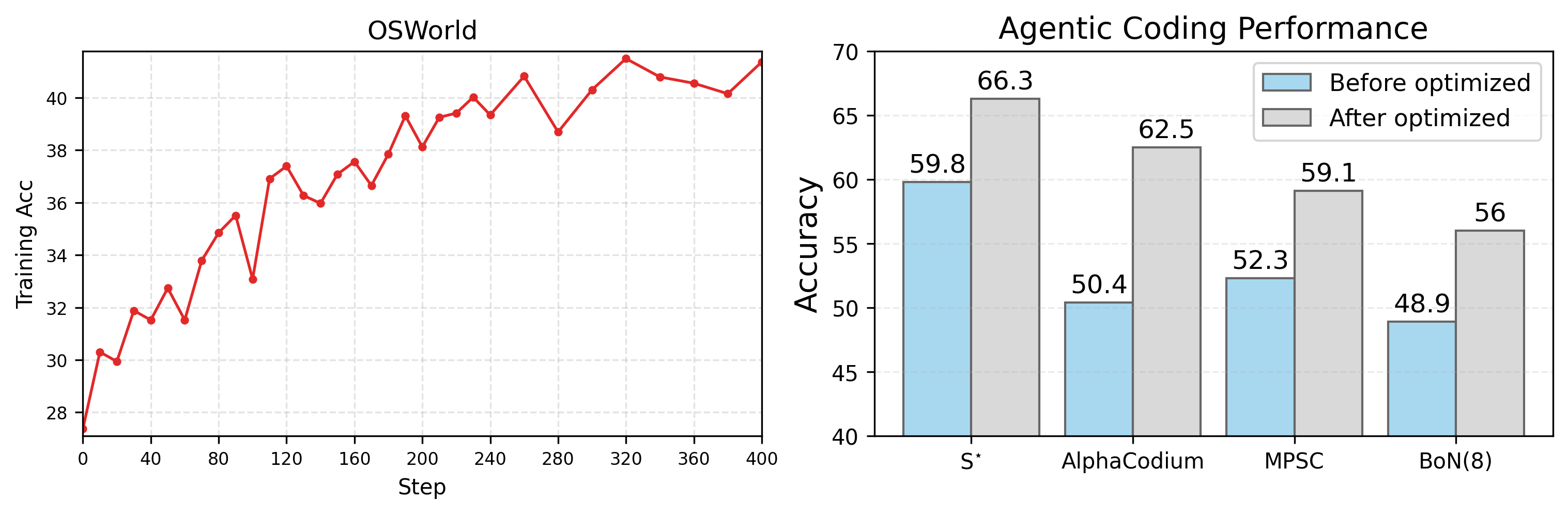}
  \caption{Left: Scaling up training for GUI agents. Right: Agentic coding results on LiveBench.}
  \label{agenticode}
  \vspace{-3mm}
\end{figure}

\subsubsection{Response Length on AlfWorld}

We study response length and reasoning patterns in the AlfWorld setting (Figure~\ref{envdyn}(b)). Initially, the policy model (Qwen2.5-7B-Instruct) often fails to produce adequate reasoning before taking actions. After optimization, its chain-of-thought length increases rapidly, and by the end of training, responses become more stable and efficient while still maintaining sufficient reasoning ability.

\section{Related Works}

\subsection{Reinforcement Learning of Large Language Models}

Reinforcement learning has been used to enhance the reasoning abilities of language models \citep{guo2025deepseek,k1.5,openr1,o1,hu2025open,yu2025dapo,wang2025revolutionizing} and has been applied across diverse settings, including coding tasks \citep{yang2024acecode,wang2025co} and RAG tasks \citep{jiang2025deepretrieval,jin2025search}. With the widespread adoption of agentic AI, RL has also been extended to multi-turn settings \citep{dong2025agentic,wang2025ui,li2025efficient,lu2025uir1,lu2025ui,OpenManus,wang2025scoreflow,zhou2025mai}, where the policy model interacts with an environment over long trajectories. However, reward sparsity \citep{lightman2023let, xi2024training} and the limited scale of existing environments \citep{cobbe2020leveraging, team2021open, zhou2023webarena, yao2022webshop, xie2024osworld} remain key challenges.

\subsection{Reward Modeling and Environments}

Reward models, especially generative reward models, play an important role in making reinforcement learning practical \citep{zhao2025genprm, lightman2023let}. In RLVR settings, a single outcome reward can jointly optimize both the reward model and the policy \citep{wang2025co, zha2025rl}, but this does not directly extend to multi-turn settings due to the lack of step-wise supervision. 
The quality of environment is also critical for effective RL \citep{zhou2023webarena, yao2022webshop, xie2024osworld}. Prior work has shown that adjusting task difficulty can improve policy training \citep{yang2025qwen3, yu2025dapo}, motivating methods that generate or modify tasks to strengthen learning signals. \citep{zala2024envgen, xiong2025reinforce, guo2025genenv, lu2025don}. For example, \citet{zeng2025rlve} builds an RLVR engine where each task comes with multiple difficulty levels; \citet{xue2026evocua} scales this direction via verifiable, automated task synthesis. However, these systems lack the step-wise signal needed for long-horizon interactive tasks. In contrast, we show that coupled optimization of the environment, policy, and reward model yields stronger signals for whole system.

\section{Conclusion}

We introduce RLAnything, a dynamic closed-loop RL framework that jointly optimizes the environment, policy, and reward model to amplify learning signals. RLAnything trains the policy with integrated supervision and improves the reward model via consistency feedback, providing stronger and more reliable signals throughout training. Motivated by our theory, we show that balancing task difficulty benefits both policy and reward optimization, and implement automatic environment adaptation using critic feedback from both. Experiments across GUI agent, LLM agent, and coding LLM settings verify RLAnything’s effectiveness, yielding substantial gains on OSWorld, AlfWorld and LiveBench.

\clearpage

\bibliography{main}

\clearpage

\appendix

\section*{Appendix}

\startappendixtoc        
\setcounter{tocdepth}{2} 

\appendixtableofcontents 

\section{Proof of Theorems}

\subsection{Proof of Theorem~\ref{thm1}}

\begin{proof}[Proof]

We can write 
\begin{align*}
&S_{\tau_i^{+}} - S_{\tau_i^{-}}\\
=&\sum_{j = 1}^m S_{\tau_i^{+}, j} / m - \sum_{j = 1}^m S_{\tau_i^{-}, j} / m\\
=&\sum_{j = 1}^m (S_{\tau_i^{+}, j} - S_{\tau_i^{-}, j}) / m\\
=&\sum_{j = 1}^m X_j / m,
\end{align*}
where $X_j = S_{\tau_i^{+}, j} - S_{\tau_i^{-}, j}$, $1 \le j \le m$. Now we obtain the distribution of $X_j$. We have
\[
P(X_j = 2) = P(S_{\tau_i^{+}, j} = 1, S_{\tau_i^{-}, j} = -1)=P(S_{\tau_i^{+}, j} = 1) P( S_{\tau_i^{-}, j} = -1) = p_{+} p_{-},
\]
\[
P(X_j = -2) = P(S_{\tau_i^{+}, j} = -1, S_{\tau_i^{-}, j} = 1)=P(S_{\tau_i^{+}, j} = -1) P( S_{\tau_i^{-}, j} = 1) = (1 -p_{+}) (1 - p_{-}),
\]
\[
\text{and  }P(X_j = 0) = p_{+} (1 - p_{-}) + p_{-} (1 - p_{+}),
\]
where $p_{+} = P(S_{\tau^{+}_i, j} = 1)$ and $p_{-} = P(S_{\tau^{-}_i, j} = -1)$.
Therefore mean $\mathbb{E}[X_j] = 2 (p_{+} + p_{-} - 1)$, and variance $Var[X_j]  \le 4$. Note that $X_j$ are independent. The strong law of large numbers (SLLN) tells us
\[
S_{\tau_i^{+}} - S_{\tau_i^{-}} \to 2 (\mu - 1).
\]
So we have
\begin{itemize}
    \item If $\mu > 1$, then $S_{\tau_i^{+}} - S_{\tau_i^{-}}$ is eventually positive almost surely, so $P (S_{\tau_i^{+}} - S_{\tau_i^{-}} > 0) \to 1.$
    \item If $\mu < 1$, then $S_{\tau_i^{+}} - S_{\tau_i^{-}}$ is  eventually negative a.s., so $P (S_{\tau_i^{+}} - S_{\tau_i^{-}} > 0) \to 0.$
     \item If $\mu = 1$, $P(S_{\tau_i^{+}} - S_{\tau_i^{-}} > 0) \to 0.5 $ by symmetry of the CLT
limit distribution.
\end{itemize}
Moreover, by Hoeffding’s inequality, we have
\begin{align*}
P(S_{\tau_i^{+}} > S_{\tau_i^{-}})=&P(\sum_{j = 1}^m X_j > 0)\\
=&P(\sum_{j = 1}^m (X_j - 2\mu + 2) > - 2 m (\mu - 1))\\
\ge&1 - e^{ - m^2( \mu - 1)^2 / (4 m)}\\
=&1 - e^{-m (\mu - 1)^2 / 4}.
\end{align*}

\end{proof}

\subsection{Proof of Theorem~\ref{thm2} and Remarks}
\label{appthm2}

\begin{proof}[Proof]

For each process reward model's evaluative responses which assigns reward $S_{\tau_i, j}$ to step $\tau_i$, the reward for it is $R_{S_{\tau_i, j}} = R_{\tau_i} S_{\tau_i, j}$. So the objective with respect to $S_{\tau_i, j} \sim r_{\phi}(\cdot \mid \tau_i)$ converges to 
\begin{align*}
&\mathbb{E}_{\substack{q \sim Q\\ \tau \sim \pi_{\theta}(\cdot \mid q)}} \, \mathbb{E}_{S_{\tau_i, j} \sim r_{\phi}(\cdot \mid \tau_i)} \big[ R_{S_{\tau_i, j}} \big]\\
=&\mathbb{E}_{\substack{q \sim Q\\ \tau \sim \pi_{\theta}(\cdot \mid q)}} \, \mathbb{E}_{S_{\tau_i, j} \sim r_{\phi}(\cdot \mid \tau_i)} \big[ S_{\tau_i, j} (O_{\tau} + \mathbb{E}_{S_{\tau_i, l} \sim r_{old}(\cdot \mid \tau_i)}[S_{\tau_i, l}]) \big].
\end{align*}

Given condition $O_{\tau} = 1$, then 
\begin{align*}
O_{\tau} + \mathbb{E}_{S_{\tau_i, l} \sim r_{old}(\cdot \mid \tau_i)}[S_{\tau_i, l}] = 1 + 2 P(S_{\tau_i, l} = 1\mid r_{old}, O_{\tau} = 1) - 1 = 2 P(S_{\tau_i, l} = 1\mid r_{old}, O_{\tau} = 1),
\end{align*}
denoted as $2 p_{old} (S_{\tau_i^{+}, l} = 1)$. $\tau_{i}^{+}$ means conditional on $O_{\tau} = 1$.

Similarly, given condition $O_{\tau} = -1$,
\begin{align*}
O_{\tau} + \mathbb{E}_{S_{\tau_i, l} \sim r_{old}(\cdot \mid \tau_i)}[S_{\tau_i, l}] = -1 - 2 P(S_{\tau_i, l} = -1\mid r_{old}, O_{\tau} = -1) + 1 = - 2 P(S_{\tau_i, l} = -1\mid r_{old}, O_{\tau} = -1),
\end{align*}
denoted as $-2 p_{old} (S_{\tau_i^{-}, l} = -1)$. Therefore, the objective is
\begin{align*}
&\mathbb{E}_{\substack{q \sim Q\\ \tau \sim \pi_{\theta}(\cdot \mid q)}}\big[ 2 p_{old} (S_{\tau_i^{+}, l} = 1)(2 P(S_{\tau_i^{+}, j} = 1) - 1)\mathbb{1}(O_{\tau} = 1)\\
&- 2 p_{old} (S_{\tau_i^{-}, l} = -1)(2 P(-S_{\tau_i^{-}, j} = -1) + 1)\mathbb{1}(O_{\tau} = -1) \big]\\
=&4 \mathbb{E}_{\substack{q \sim Q\\ \tau \sim \pi_{\theta}(\cdot \mid q)}}\big[ p_{old} (S_{\tau_i^{+}, l} = 1) P(S_{\tau_i^{+}, j} = 1)\mathbb{1}(O_{\tau} = 1) \\
& +p_{old} (S_{\tau_i^{-}, l} = -1) P(S_{\tau_i^{-}, j} = -1)\mathbb{1}(O_{\tau} = -1)\big] + C\\
=&4 \mathbb{E}_{q \sim Q} \mathbb{E}_{\tau \sim \pi_{\theta}(\cdot \mid q)} \big[ p_{+} p_{old} (S_{\tau_i^{+}, l} = 1) \mathbb{1}(O_{\tau} = 1) + p_{-} p_{old} (S_{\tau_i^{-}, l} = -1) \mathbb{1}(O_{\tau} = -1) \big] + C \\
=& 4 \mathbb{E}_{q \sim Q} \big[ \langle p_{+}, f_{+} \rangle + \langle p_{-}, f_{-} \rangle \big] + C,
\end{align*}
where $f_{+} := f_{+}(\tau) = p_{old} (S_{\tau_i^{+}, l} = 1) \mathbb{1}(O_{\tau} = 1) \ge 0$ and $f_{-} := f_{-}(\tau) = p_{old} (S_{\tau_i^{-}, l} = -1) \mathbb{1}(O_{\tau} = -1)\ge 0$. So if $P(O_{\tau} = -1 \mid q, \pi_{\theta}) \to 1$, $\|f_{+}\| / \|f_{-}\| \to 0$; if $P(O_{\tau} = 1 \mid q, \pi_{\theta}) \to 1$, $\|f_{+}\| / \|f_{-}\| \to \infty$. 

Note that $C = - 2 \mathbb{E}_{\substack{q \sim Q\\ \tau \sim \pi_{\theta}(\cdot \mid q)}} \big[ p_{old}(S_{\tau_i^{+}, l} = 1) \mathbb{1}(O_{\tau} = 1) + p_{old}(S_{\tau_i^{-}, l} = -1) \mathbb{1}(O_{\tau} = -1) \big]$, which is irrelevant with $\phi$ we are optimizing this reinforcement learning step.

\end{proof}

\paragraph{Remark 1. Different Choices for $\lambda$}
When $\lambda = 1$ (our default), $f_{+}$ and $f_{-}$ are nonnegative importance-sampling weights for $p_{+}$ and $p_{-}$, respectively. As $\lambda$ increases, the policy signal shifts from outcome reward toward process reward, and the reward-model supervision shifts from outcome prediction toward self-consistency. For $\lambda>0$,
$f_{+}=1+2\lambda\,p_{\text{old}}(S_{\tau_i^{+},l}=1)-\lambda$
and
$f_{-}=1+2\lambda\,p_{\text{old}}(S_{\tau_i^{-},l}=-1)-\lambda$.
Thus, as long as
$p_{\text{old}}(S_{\tau_i^{+},l}=1)\ge \tfrac12$
and
$p_{\text{old}}(S_{\tau_i^{-},l}=-1)\ge \tfrac12 \ge \tfrac{\lambda-1}{2\lambda}$,
the reward system still optimizes the reward model’s ability to predict future outcomes. The condition
$p_{\text{old}}(S_{\tau_i^{+},l}=1),\,p_{\text{old}}(S_{\tau_i^{-},l}=-1)\ge \tfrac12$
simply requires the process reward to be better than random guessing, which is mild; actually, $\tfrac12 \ge \tfrac{\lambda-1}{2\lambda}$ holds automatically for $\lambda>0$. Together, these observations highlight the robustness of our reward system. Also see the ablation results on $\lambda$ (Section~\ref{tradeofflambda}), which illustrate the trade-off between outcome-conditioned and self-consistency supervision.

\paragraph{Remark 2. Reward Design for Coding Tasks.}
The reward design for coding tasks in our experiments also fits within the general RLAnything framework. Recall from Section~\ref{expcodesec} that, for each generated unit test (UT), if it is ground truth (gt), its reward is the number of non-gt code solutions that it causes to fail; otherwise, its reward is the negative number of non-gt code solutions that it incorrectly allows to pass. Since the rewards are ultimately standardized across all generated unit tests for each task, we can formalize the reward as
$\widehat{p}_{c}\widehat{p}_{d} - (1-\widehat{p}_{c})(1-\widehat{p}_{d})$,
where $\widehat{p}_{c}=\mathbb{1}\{\text{UT is gt}\}$ and $\widehat{p}_{d}$ is the proportion of non-gt code solutions that the UT detects. This simplifies to
$\widehat{p}_{c} + \widehat{p}_{d} - 1$.
At the population level, $\mathbb{E}[\widehat{p}_{c}+\widehat{p}_{d}] = p_{+}+p_{-}$, which is exactly the quantity we aim to optimize.

\section{Additional Experimental Results}
\label{appadditionalexp}

\begin{table}[h]
\centering
\caption{Reward model's step-wise accuracy (\%) under different supervision sources across GUI agent and LLM agent settings. }
\label{tab:reward_acc_supervision_source}

\begin{adjustbox}{max width=\columnwidth}
\begin{tabular}{l|cc|cc}
\specialrule{1.2pt}{0pt}{0pt}
\multirow{2}{*}{\textbf{Supervision Source}} &
\multicolumn{2}{c|}{\textbf{GUI Agent}} &
\multicolumn{2}{c}{\textbf{LLM Agent}} \\
\cline{2-3}\cline{4-5}
& \textbf{Qwen3-VL-32B-Thinking} & \textbf{OpenCUA-72B}
& \textbf{Qwen3-32B} & \textbf{gpt-oss-20b} \\
\hline\hline
Policy                 & \numcell{86.0} & \numcell{83.4} & \numcell{47.0} & \numcell{40.5} \\
Policy + Reward        & \numcell{88.7} & \numcell{86.2} & \numcell{52.4} & \numcell{49.8} \\
\rowcolor[gray]{.9}
Policy + Reward + Env  & \numcell{\textbf{91.3}}\posdelta{5.3} & \numcell{\textbf{89.9}}\posdelta{6.5}
                      & \numcell{\textbf{56.4}}\posdelta{9.4} & \numcell{\textbf{52.2}}\posdelta{11.7} \\
\specialrule{1.2pt}{0pt}{0pt}
\end{tabular}
\end{adjustbox}

\end{table}

\begin{table}[h]
\centering
\caption{Reward model's step-wise accuracy (\%) under different trajectory sources across GUI agent and LLM agent settings.}
\label{tab:reward_acc_trajectory_source}

\begin{adjustbox}{max width=\columnwidth}
\begin{tabular}{l|cc|cc}
\specialrule{1.2pt}{0pt}{0pt}
\multirow{2}{*}{\textbf{Trajectory Source}} &
\multicolumn{2}{c|}{\textbf{GUI Agent}} &
\multicolumn{2}{c}{\textbf{LLM Agent}} \\
\cline{2-3}\cline{4-5}
& \textbf{Qwen3-VL-8B-Thinking} & \textbf{OpenCUA-7B}
& \textbf{Qwen2.5-7B-Instruct} & \textbf{LLaMA-3.1-8B-Instruct} \\
\hline\hline
Policy                 & \numcell{86.0} & \numcell{84.7} & \numcell{47.0} & \numcell{56.5} \\
Policy + Reward        & \numcell{88.7} & \numcell{87.6} & \numcell{52.4} & \numcell{67.1} \\
\rowcolor[gray]{.9}
Policy + Reward + Env  & \numcell{\textbf{91.3}}\posdelta{5.3} & \numcell{\textbf{89.6}}\posdelta{4.9}
                      & \numcell{\textbf{56.4}}\posdelta{9.4} & \numcell{\textbf{72.7}}\posdelta{16.2} \\
\specialrule{1.2pt}{0pt}{0pt}
\end{tabular}
\end{adjustbox}

\end{table}

\subsection{Ablation Studies on Using Different Models for Reward Model Evaluation}
\label{ablrewardeval}

In this section, we show that using different supervisor models to evaluate the accuracy of assigned step-wise rewards leads to the same conclusion. We provide details of this evaluation approach in Appendix~\ref{evalreward}. We conduct ablation studies using OpenCUA-72B in the GUI setting and gpt-oss-20b \citep{openai2025gptoss120bgptoss20bmodel} in the LLM-agent setting. In addition, we vary the models used to generate the trajectories that the reward model evaluates: we use OpenCUA-7B for the GUI setting and LLaMA-3.1-8B-Instruct \citep{dubey2024llama} for the LLM-agent setting. The results (Tables~\ref{tab:reward_acc_supervision_source} and \ref{tab:reward_acc_trajectory_source}) are consistent with our main results in Table~\ref{tab:finaleval}, validating our approach of using an LLM-as-a-judge for evaluation.

\subsection{Examples of Environment Adaptation}
\label{appexample}

The following GUI-agent example illustrates how the environment model uses summarized error patterns to add tips to the task prompt, making the task easier for the policy. As a result, the policy’s accuracy on this task increases from 0 to 0.125. The original task is too difficult for the policy, yielding no successful trajectories and thus no training signal from success cases. After adding tips to simplify the task, the policy achieves occasional successes, providing learning signal and enabling it to gradually solve the task.

\begin{tcolorbox}[colback=blue!5!white, colframe=gray!50!black, title=GUI Task Adaptation Example 1 (Target: Easier), breakable]
\textbf{template:} Using the Pivot Table feature, summarize the total revenue for each promotion type in a new sheet ('Sheet2'), with the promotion names as the column headers.

\textbf{old task prompt:} Summarize the total revenue for each promotion type in a new sheet (Sheet2) with the promotion names as the column headers using the Pivot Table feature.

\textbf{new task prompt:} Using the Pivot Table feature, summarize the total revenue for each promotion type in a new sheet ('Sheet2'), with the promotion names as column headers. Ensure 'Promotion' is in Column Fields and use 'Insert', 'PivotTable' for correct setup. Verify field names to avoid confusion between 'Date' and 'Promotion'.

\textbf{policy accuracy change:} $0 \rightarrow 0.125$ (easier for the policy)
\end{tcolorbox}

The following example shows a case where the GUI task is too easy for the policy model, so the perturbation makes the task harder by switching to a more challenging task template.

\begin{tcolorbox}[colback=blue!5!white, colframe=gray!50!black, title=GUI Task Adaptation Example 2 (Target: Harder), breakable]
\textbf{template candidates:}
\begin{itemize}
  \item \textbf{template1:} Copy the ``Revenue'' column along with the header to a new sheet named ``Sheet2''.
  \item \textbf{template2:} Copy the ``Revenue'' column along with the header to a new sheet named ``Sheet2''. Then rename this ``Revenue'' column in ``Sheet2'' to ``Profit''.
\end{itemize}

\textbf{old task prompt:} Copy the ``Revenue'' column along with the header to a new sheet named ``Sheet2''.

\textbf{new task prompt:} Copy the ``Revenue'' column along with the header to a new sheet named ``Sheet2''. Then rename this ``Revenue'' column in ``Sheet2'' to ``Profit''.

\textbf{policy accuracy change:} $1.0 \rightarrow 0.625$ (harder for the policy)
\end{tcolorbox}

The following two examples adjust task difficulty by varying the number of objects involved.

\begin{tcolorbox}[colback=blue!5!white, colframe=gray!50!black, title=AlfWorld Task Adaptation Example 1 (Target: Easier), breakable]
\textbf{task:} Modify the goal tokens (object type, receptacle type) to make the task \textbf{EASIER} while preserving the original goal framework.

\textbf{response:} I need to find a pair of object and receptacle types that satisfy the constraints and are more likely to be successfully completed given the current state.

Looking at the ``cleanable'' list, many objects are cleanable. The ``canContain'' list shows that many receptacles can hold many objects.

The original goal was to clean PotatoType and place it in MicrowaveType. While valid, this might be less likely to succeed because:
1. PotatoType has only 2 instances (potato1, potato2)
2. MicrowaveType can only hold certain objects (AppleType, BowlType, BreadType, CupType, EggType, MugType, PlateType, PotatoType, TomatoType)
3. The agent is at loc32, which might not be near the microwave

A better choice would be an object with more instances and a receptacle that can hold it. Looking at the ``cleanable'' list, ``CupType'' has 3 instances (cup1, cup2, cup3) and can be placed in many receptacles including MicrowaveType.

The goal ``clean CupType then place into MicrowaveType'' would be:
- More likely to succeed due to more instances
- Valid because CupType is cleanable and MicrowaveType can contain it
- Follows the framework of cleaning an object and placing it in a receptacle

\textbf{Final Answer}

\boxed{CupType, MicrowaveType}

\textbf{policy accuracy change:} $0.125 \rightarrow 0.5$
\end{tcolorbox}

\begin{tcolorbox}[colback=blue!5!white, colframe=gray!50!black, title=AlfWorld Task Adaptation Example 2 (Target: Harder), breakable]
\textbf{task:} Modify the goal tokens (object type, receptacle type) to make the task \textbf{HARDER} while preserving the original goal framework.

\textbf{response:} Let me analyze the requirements for creating a harder goal.

The original goal is to place a RemoteControlType into an ArmChairType. This is possible because (ArmChairType, RemoteControlType) is in the canContain list.

To make this harder, I need to:
1. Choose an object type with fewer instances
2. Choose a receptacle type that can contain it but has fewer instances
3. Ensure the pair is valid (satisfies canContain)

Looking at the object types:
- WatchType has 3 instances (watch1, watch2, watch3)
- KeyChainType has 3 instances (keychain1, keychain2, keychain3)
- CreditCardType has 1 instance (creditcard1)

Looking at receptacle types:
- ArmChairType has 2 instances (armchair1, armchair2)
- CoffeeTableType has 2 instances (coffeetable1, coffeetable2)
- DiningTableType has 1 instance (diningtable1)
- SofaType has 1 instance (sofa1)

The pair (KeyChainType, ArmChairType) is valid (in canContain), and:
- KeyChainType has 3 instances
- ArmChairType has 2 instances

This is a valid pair, but it's not harder than the original goal.

Let's try (CreditCardType, ArmChairType):
- CreditCardType has 1 instance
- ArmChairType has 2 instances
- (ArmChairType, CreditCardType) is in canContain

This pair is valid and has fewer instances, making it harder.

So the harder goal would be:

\textbf{Final Answer}

\boxed{CreditCardType, ArmChairType}

\textbf{policy accuracy change:} $1.0 \rightarrow 0.25$
\end{tcolorbox}

\section{Experimental Details}
\label{expdetails}

\subsection{Models and Settings}
\label{detailmodelsetting}

For GUI agents applied to OSWorld \citep{xie2024osworld}, we use Qwen3-VL-8B-Thinking \citep{Qwen3-VL}, UI-TARS1.5-7B \citep{qin2025ui}, and OpenCUA-7B \citep{wang2025opencua} in our experiments. For final evaluation, we set the maximum number of steps to 50 and the temperature to 0, and report the average accuracy over 3 independent runs. During RL optimization, we set the maximum number of steps to 30 and use a temperature of 1.0 for the policy model and 0.8 for the process reward model. For the policy model, at each RL step we sample 12 tasks, each with 8 independent rollout trajectories. Context management follows the standard OSWorld pipeline \citep{xie2024osworld} by including the three most recent images and summarizing all previous actions as context. When using OpenCUA, we set the CoT level to 2, following its default setting. For the reward model, we use Qwen3-VL-8B-Thinking as the base model and perform 3 evaluations for each policy response. The reward context is constructed by summarizing all previous actions and including the two most recent images, along with the action between those two images which we ask reward model to evaluate. We use Qwen3-4B \citep{yang2025qwen3} to adapt tasks. During optimization, we set the post-execution wait time (before taking a screenshot) to 0 to keep training efficient, whereas during evaluation we set it to 5s. We use 12 nodes to conduct training.

For the LLM agent applied to AlfWorld \citep{shridhar2020alfworld, cote2018textworld}, we use Qwen2.5-7B-Instruct as the policy model and Qwen2.5-14B-Instruct as the reward model \citep{yang2024qwen2}, and we use Qwen3-4B \citep{yang2025qwen3} to adapt tasks. For final evaluation, we set the maximum number of steps to 60 and the temperature to 0.8, and report the average accuracy over 3 independent runs. During RL optimization, we set the maximum number of steps to 40 and use a temperature of 0.8 for both the policy model and the process reward model. At each RL step, we sample 16 tasks, each with 8 independent rollouts.
We construct the policy-model context by summarizing all previous actions and their corresponding observations, and including the most recent action to be chosen. We use 8 nodes to conduct the training.

For coding LLMs, we use the same model combination as in the LLM-agent setting above. At each RL step, we sample 64 tasks, with 32 independent code-solution generations and 32 independent unit-test generations per task. We use 4 nodes to conduct training.

During all RL training runs, we use the following standard hyperparameters in the policy objective: clipping threshold $\epsilon=0.2$ \citep{schulman2017proximal}, KL-divergence weight $\beta=0.01$, and a learning rate of $1\times 10^{-6}$. We use the k3 KL estimator and optimize with AdamW \citep{loshchilov2017decoupled}. We train for 240 steps for the GUI agent, 200 steps for the LLM agent, and 300 steps for the coding LLM to obtain the final models reported in Table~\ref{tab:finaleval}.

\subsection{Evaluation for Reward Models}
\label{evalreward}

For OSWorld and AlfWorld settings, we evaluate both the step-wise quality (process accuracy) and the ability to predict a step's influence on the final outcome (outcome accuracy). For outcome accuracy, the ground truth label is simply the verifiable outcome. For process accuracy, the label is provided by a stronger reasoning model than the reward model used. 

Specifically, in the OSWorld setting, we use Qwen3-VL-32B-Thinking to provide eight independent evaluations for each policy response using the prompt in Section~\ref{rewardprompt}, and take the majority vote ($1$ or $-1$) as the ground-truth label. Policy responses are generated by Qwen3-VL-8B-Thinking on the OSWorld multiple apps tasks: for each task, the policy samples 16 independent rollouts, producing 16 trajectories per task.
In the AlfWorld setting, we use Qwen3-32B for evaluation and Qwen2.5-7B-Instruct to generate trajectories on the AlfWorld OOD evaluation set (Section~\ref{detailtraindata}), using the same protocol and hyperparameters as in the OSWorld setting.

In the coding setting, we use Qwen2.5-7B-Instruct as the policy model to generate 16 independent code solutions, and Qwen2.5-14B-Instruct as the reward model to generate 32 independent unit tests for each task in the evaluation datasets (LiveCodeBench, CodeContests, or LiveBench). A generated solution is labeled as ground-truth-correct if it passes all dataset-provided unit tests. A generated unit test is correct if it passes on all ground-truth-correct solutions, and is perfect if it is correct and also rejects all non-ground-truth solutions (i.e., causes them to fail). We evaluate the reward model using both the correctness rate and the perfect rate, where the latter corresponds to the detect accuracy reported in Table~\ref{tab:finaleval}.

\subsection{Agentic Coding Applications}
\label{agenticcodingapp}

We evaluate our optimized coding models under several agentic coding methods, including MPSC \citep{huang2024enhancing}, AlphaCodium \citep{ridnik2024code}, and $S^{\star}$ \citep{li2025s}.

In MPSC, we generate 8 samples of code, unit tests, and specifications. A specification is a pair of functions (a pre-condition and a post-condition) that define the valid input space and the expected input-output behavior of a program, serving as a formal description of its intended functionality. We then follow the iterative optimization procedure to compute consistency scores, which are used to identify the best code solution.

In AlphaCodium, we generate 8 code solutions per task using reasoning over public tests, along with 8 corresponding unit tests. Each solution undergoes two refinement iterations based on execution results from the public tests, followed by two additional iterations based on execution results on the generated unit tests. Concretely, each refinement step conditions on the unit tests, the current code, and the execution logs, and decides whether and how to update the solution.

In $S^{\star}$, we generate 8 code solutions and apply four iterations of self-debugging using public tests to obtain 8 refined versions. Since debugging relies on execution results from ground-truth unit tests, we directly prompt the model to revise the code when a test fails. The final solution is selected using their pairwise comparison method, with generated unit tests as the evaluation signal.

We also consider the most simple test time scaling, method, best of N. Specifically, we independently generate 8 codes and 8 unit tests, and select the code that passes the most generated unit tests as the final solution.

\subsection{Policy Prompt Templates}
\label{policyprompt}

We use these templates for context management in both RL sampling and final evaluation. For OpenCUA and UI-TARS, we follow the standard OSWorld pipeline, which summarizes previous actions while retaining the three most recent images as context.

\begin{tcolorbox}[colback=blue!5!white, colframe=gray!50!black, title=GUI Agent Prompt Templates (Qwen3-VL-8B-Thinking), breakable, enhanced jigsaw]

\textbf{Tool-Calling; System Prompt}

\begin{lstlisting}[style=prompt]
'''<|im_start|>system
# Tools
You may call one or more functions to assist with the user query.

You are provided with function signatures within <tools></tools> XML tags:
<tools>
{{tools_def}}
</tools>

For each function call, return a json object with function name and arguments within <tool_call></tool_call> XML tags:
<tool_call>
{"name": <function-name>, "arguments": <args-json-object>}
</tool_call>

# Response format
Response format for every step:
1) Action: a short imperative describing what to do in the UI.
2) A single <tool_call>...</tool_call> block containing only the JSON:
   {"name": <function-name>, "arguments": <args-json-object>}.

Rules:
- Output exactly in the order: Action, <tool_call>.
- Be brief: one sentence for Action.
- Do not output anything else outside those parts.
- If finishing, use action=terminate in the tool call.
<|im_end|>
'''
\end{lstlisting}

\textbf{Message Construction}

\begin{lstlisting}[style=prompt]
# We construct a multimodal message list as follows:
# 1) A system message containing the tool-calling specification and the tool schema.
# 2) For historical context, we keep:
#    - All past actions as a text-only history (Step 1: ..., Step 2: ..., ...).
#    - At most the most recent 3 screenshots (image-only history).
# 3) For each retained past step i:
#    - Append a user message with the screenshot i.
#    - Append an assistant message with the model's response at step i (Action + <tool_call>).
# 4) For the current step:
#    - Append a user message containing the current screenshot + the instruction prompt
#      (which includes the instruction and the full action history).

# Variables used in the paper template:
# - tools_def: JSON string of tool definitions (i.e., json.dumps(tools_def))
# - step_index: current step id (0-based)
# - screenshots[i]: base64-encoded PNG screenshot at step i (string without the data: prefix)
# - responses[i]: assistant response text at step i (Action + <tool_call>)
# - actions: list of action strings taken so far (for the text-only action history)
# - instruction: the current task instruction (string)

messages = [
  {"role": "system", "content": [{"type": "text", "text": system_prompt}]}
]

# Keep at most the last 3 screenshots
start_i = max(0, step_index - 3 + 1)

for i in range(start_i, step_index):
  # Historical screenshot i
  img_url_i = f"data:image/png;base64,{screenshots[i]}"
  messages.append(
    {"role": "user", "content": [{"type": "image_url", "image_url": {"url": img_url_i}}]}
  )
  # Historical assistant response i (Action + <tool_call>)
  messages.append(
    {"role": "assistant", "content": [{"type": "text", "text": responses[i]}]}
  )

# Text-only full action history
previous_actions_str = "None" if len(actions) == 0 else "\n".join(
  [f"Step {k+1}: {a}" for k, a in enumerate(actions)]
)

instruction_prompt = f"""
Please generate the next move according to the UI screenshot, instruction and previous actions.

Instruction: {instruction}

Previous actions:
{previous_actions_str}
"""

# Current screenshot + instruction prompt
curr_img_url = f"data:image/png;base64,{screenshots[step_index]}"
messages.append(
  {"role": "user", "content": [
    {"type": "image_url", "image_url": {"url": curr_img_url}},
    {"type": "text", "text": instruction_prompt},
  ]}
)
\end{lstlisting}

\end{tcolorbox}

The following is the prompt template for the policy model in the AlfWorld setting, which summarizes all previous actions and the corresponding observations as context.

\begin{tcolorbox}[colback=blue!5!white, colframe=gray!50!black, title=LLM Agent Prompt Templates (AlfWorld), breakable, enhanced jigsaw]

\textbf{Guide Prompt)}

\begin{lstlisting}[style=prompt]
guide = (
  "You are playing a text game. Your objective is to complete the task as soon as possible.\n"
  "Below is your trajectory so far and current candidate actions.\n"
  "You need to think step by step then put the integer (the index of your chosen action) in \\boxed{}.\n"
)
\end{lstlisting}

\textbf{Trajectory Rendering; summarized history}

\begin{lstlisting}[style=prompt]
# The trajectory is summarized into alternating observation/action lines.
def _render_traj(traj):
  lines = []
  for t in traj:
    if "obs" in t and t["obs"] is not None:
      lines.append(f"observation: {t['obs']}")
    if t.get("act") is not None:
      lines.append(f"you took action: {t['act']}")
  return "\n".join(lines)

trajectory_history = _render_traj(traj)
\end{lstlisting}

\textbf{Full Prompt Template}

\begin{lstlisting}[style=prompt]
'''<|im_start|>system
You are a helpful assistant.
<|im_end|>
<|im_start|>user
{{guide}}

{{trajectory_history}}

You need to think step by step then choose one action by number:
{{action_options}}
<|im_end|>
<|im_start|>assistant
'''
\end{lstlisting}

\end{tcolorbox}

The following is prompt template used in coding settings.

\begin{tcolorbox}[colback=blue!5!white, colframe=gray!50!black, title=Coding LLM Prompt Templates, breakable, enhanced jigsaw]

\begin{lstlisting}[style=prompt]
'''<|im_start|>system
You are a helpful assistant that helps the user solve programming problems.
<|im_end|>
<|im_start|>user
You need to think first then write a Python script.
You should use input() to read input and print() to produce output in your script.
This is the problem:
{{problem}}

You should put your code in ```python ```.
<|im_end|>
<|im_start|>assistant
'''
\end{lstlisting}

\end{tcolorbox}

\subsection{Process Reward Model Prompt Templates}
\label{rewardprompt}

For the GUI-agent setting, we use Qwen3-VL-8B-Thinking as the process reward model to evaluate each policy response. The reward-model context consists of a summary of all previous actions, the two most recent images, and the action between those images that we evaluate. For the AlfWorld setting, we provide the policy prompt and response, and ask the LLM to judge the response. In both settings, the prompts are designed to assess step-wise quality as well as the step's potential consequences for the final outcome. The final output reward can only be 1 or -1.
For the coding-LLM setting, we prompt the LLM to generate unit tests. The quality of these unit tests serves as an evaluation signal for certain aspects of the generated code and can be used as a specialized form of process reward.

The following is the prompt template for reward model in OSWorld's setting.

\begin{tcolorbox}[colback=blue!5!white, colframe=gray!50!black, title=GUI Agent Rewarding Prompt Templates, breakable, enhanced jigsaw]

\begin{lstlisting}[style=prompt]
# We build reward_messages as a multimodal user content list.
# The reward prompt includes:
# - A text-only prefix containing "Previous Actions" (older action history).
# - A short window of recent steps: for each step i in the window,
#     (a) the environment screenshot at step i,
#     (b) the agent action taken at step i.
# - The current observation screenshot (the state after the most recent action).
# - A strict evaluation instruction describing the agent objective and the most recent response.

# Variables used in the paper template:
# - step_index: current step id (0-based, the "most recent step" is step_index)
# - actions[i]: action text taken at step i
# - instruction: task instruction / objective string
# - response: the agent's most recent response (reasoning + action/tool call)
# - reward_messages: chat message list for the reward model
# - reward_user_content: multimodal user content list (text/image blocks)

reward_user_content = []

# Keep at most the last 2 steps for reward context (excluding the current observation).
rstart_i = max(0, step_index - 2 + 1)

# (1) Previous Actions: all actions before rstart_i
prev_lines = []
for i in range(rstart_i):
  prev_lines.append(f"Step {i+1}: {actions[i]}")
previous_reward_actions_str = "\n".join(prev_lines) if prev_lines else "None"

reward_user_content.append({
  "type": "text",
  "text": f"Previous Actions:\n{previous_reward_actions_str}"
})

# (2) Recent step window: for each step i in [rstart_i, step_index)
for i in range(rstart_i, step_index):
  reward_user_content.append({"type": "text", "text": "Image of environment:\n"})
  reward_user_content.append({"type": "image", "image": "<image>"})
  reward_user_content.append({
    "type": "text",
    "text": f"\nAction of agent:\nStep {i+1}:\n{actions[i]}\n"
  })

# (3) Current observation image (after executing the most recent action)
reward_user_content.append({"type": "text", "text": "Agent's current observation:\n"})
reward_user_content.append({"type": "image", "image": "<image>"})

# (4) Evaluation instruction (objective + most recent response)
REWARD_INSTRUCTION_TEMPLATE = r"""
You are a strict evaluator to evaluate the most recent step of the agent in the following.
Objective of Agent: {instruction}
Agent's most recent step (reasoning + action): {response}
"""

reward_user_content.append({
  "type": "text",
  "text": "\n" + REWARD_INSTRUCTION_TEMPLATE.format(
    instruction=instruction,
    response=response
  )
})

reward_messages.append({"role": "user", "content": reward_user_content})
\end{lstlisting}

\end{tcolorbox}

In our evaluation of the step-wise quality predicted by the reward model, we use the following reward instruction template and ask Qwen3-VL-32B-Thinking to provide labels:

\begin{tcolorbox}[colback=blue!5!white, colframe=gray!50!black, title=REWARD\_INSTRUCTION\_TEMPLATE for evaluating reward model's step-wise accuracy (OSWorld), breakable, enhanced jigsaw]

\begin{lstlisting}[style=prompt]

REWARD_INSTRUCTION_TEMPLATE = r"""
You are a strict evaluator to evaluate the most recent step of the agent in the following. Focus on the quality of this step.
Objective of Agent: {instruction}
Agent's most recent step (reasoning + action): {response}
"""
\end{lstlisting}

\end{tcolorbox}


The following is the prompt template for reward model in AlfWorld's setting.

\begin{tcolorbox}[colback=blue!5!white, colframe=gray!50!black, title=LLM Agent Rewarding Prompt Templates (AlfWorld), breakable, enhanced jigsaw]

\begin{lstlisting}[style=prompt]
'''<|im_start|>system
You are a helpful assistant.
<|im_end|>
<|im_start|>user
You are a judge for an agent acting in a text-based environment.
Evaluate ONE step using:
 - the agent's prompt (observation + candidate actions),
 - its response (reasoning + chosen index), and
 - the environment's next observation after executing that action.

Scoring (binary):
Score 1 if ALL are true:
 (a) The selected action is appropriate for the current observation and task goal (it reasonably explores, progresses or completes the task);
 (b) The reasoning is present, relevant, and not self-contradictory (no hallucinated objects/locations);
 (c) The chosen index exists in the candidate list, and the resulting next observation is consistent with the described action.
Otherwise score -1. Cases include: no reasoning provided; index out of range; clearly irrelevant; undoes progress; self-contradictory/hallucinated reasoning; or next observation contradicts the action.

Important: think first then put the final score in \\boxed{}.

Agent's prompt:
{{policy_prompt}}

Agent's response:
{{policy_response}}

Next observation after this action:
{{next_obs}}
<|im_end|>
<|im_start|>assistant
'''
\end{lstlisting}

\end{tcolorbox}

In our evaluation of the step-wise quality predicted by the reward model (AlfWorld setting), we use the following reward instruction template and ask Qwen3-32B to provide labels:

\begin{tcolorbox}[colback=blue!5!white, colframe=gray!50!black, title=Prompt Template for evaluating reward model's step-wise accuracy (AlfWorld), breakable, enhanced jigsaw]

\begin{lstlisting}[style=prompt]
'''<|im_start|>system
You are a helpful assistant.
<|im_end|>
<|im_start|>user
You are a judge for an agent acting in a text-based environment.
Evaluate ONE step using:
 - the agent's prompt (observation + candidate actions),
 - its response (reasoning + chosen index), and
 - the environment's next observation after executing that action.

Scoring (binary):
Score 1 if ALL are true:
 (a) The selected action is appropriate for the current observation;
 (b) The reasoning is present, relevant, and not self-contradictory (no hallucinated objects/locations);
 (c) The chosen index exists in the candidate list, and the resulting next observation is consistent with the described action.
Otherwise score -1. Cases include: no reasoning provided; index out of range; clearly irrelevant; undoes progress; self-contradictory/hallucinated reasoning; or next observation contradicts the action.

Important: think first then put the final score in \\boxed{}.

Agent's prompt:
{{policy_prompt}}

Agent's response:
{{policy_response}}

Next observation after this action:
{{next_obs}}
<|im_end|>
<|im_start|>assistant
'''
\end{lstlisting}

\end{tcolorbox}

\begin{tcolorbox}[colback=blue!5!white, colframe=gray!50!black, title=Coding LLM Reward Prompt Template, breakable, enhanced jigsaw]

\begin{lstlisting}[style=prompt]
REWARD_TEST_PROMPT = r"""<|im_start|>system
You are a rigorous unit-test designer for coding problems.
You must produce exactly ONE new test example that is correct and discriminative.
<|im_end|>
<|im_start|>user
You need to provide a new test example. A good test example should be completely accurate and conform to the problem's format requirements, while also possessing enough discriminative power to distinguish correct code from incorrect code.

Before providing a test example, you must think carefully and reason step by step to derive an input and output you are very confident are correct. For example, start by designing an input you can reliably handle, then compute the output step by step. If you're unsure about the output, revise or re-design the input to ensure accuracy. Directly providing input/output pairs without this process is discouraged, as it often results in low accuracy.

Finally, after completing these previous thinking and derivation steps (you should not write the final test example unless you have gone through these steps very thoroughly), you MUST put your final test example in the following format:

**Test Input:**
``` 
<put the EXACT stdin content here>
```

**Test Output:**
```
<put the EXACT stdout content here>
```

**Explanation:** <brief explanation here>

IMPORTANT:
- Output must contain exactly one **Test Input:** block and one **Test Output:** block.
- Use triple backticks exactly as shown.
- The test must be self-contained and match the problem format.

Problem: {{problem}}
<|im_end|>
<|im_start|>assistant
"""
\end{lstlisting}

\end{tcolorbox}

\subsection{Error Pattern Summarization and Prompt Templates}
\label{summaryprompt}

We identify where the policy might go wrong by summarizing the thinking portion of the process reward model's outputs. For each task, we obtain a few sentences describing the mistakes the policy makes while solving the task. For GUI agent, we first perform step-wise summarization by aggregating the independent evaluations at steps where at least one evaluation score is $-1$ (indicating a potential mistake), producing a step-level summary. We then perform trajectory-wise summarization: for each trajectory, we use the step-wise summaries as context and ask the model to summarize the mistakes made over the entire trajectory. As a result, for each policy trajectory of each task, we obtain a concise summary of the policy's error patterns. 
For the LLM-agent setting on AlfWorld, we directly use the full trajectory (the agent's actions and the corresponding observations), and highlight those steps where all evaluation scores are $-1$, as the summarization context. We do not perform the two-stage step-wise followed by trajectory-wise summarization used in the GUI-agent setting because the context here is much shorter and straightforward.
We use Qwen3-VL-8B-Thinking for the OSWorld setting and Qwen3-4B for the AlfWorld setting. For coding setting, the diagnostic information consists of the unit tests that the generated code fails.

\begin{tcolorbox}[colback=blue!5!white, colframe=gray!50!black, title=OSWorld (GUI Agent) Error Summarization Prompt Templates, breakable, enhanced jigsaw]

\textbf{Step-wise Summarization (OSWorld GUI)}

\begin{lstlisting}[style=prompt]
# Variables:
# - step_index: int, current step id in the trajectory
# - reward_model_responses: List[str] (or List[dict]/mixed), multiple reward-model candidates
# - extracted_reward: List[int], aligned per response (e.g., +1 / -1 / 0)

STEP_ERROR_SUMMARY_PROMPT = (
  "You are analyzing one step in a trajectory for an OSWorld/desktop task.\n\n"
  f"step_index: {step_index}\n\n"
  "You are given:\n"
  "- reward_model_responses (multiple candidates)\n"
  "- extracted_reward aligned with responses (+1/-1/0)\n\n"
  "Task:\n"
  "Write ONE high-density summary (<= 2 sentences) explaining why this step was judged negative.\n"
  "Be specific about the failure mode (e.g., wrong assumption, misread UI, inconsistent with instruction, hallucinated value, skipped constraint).\n\n"
  "Rules:\n"
  "- Do NOT repeat the prompt verbatim.\n"
  "- Final answer MUST be in \\boxed{...} ONLY.\n\n"
  "reward_model_responses:\n"
  f"{json.dumps(reward_model_responses, ensure_ascii=False)}\n\n"
  "extracted_reward:\n"
  f"{json.dumps(extracted_reward, ensure_ascii=False)}\n"
)
\end{lstlisting}

\textbf{Trajectory-wise Summarization (OSWorld GUI)}

\begin{lstlisting}[style=prompt]
# Variables:
# - step_summaries: List[dict] or List[str], step-level summaries for ONE trajectory
#   (typically produced by the step-wise prompt above)

TRAJECTORY_ERROR_SUMMARY_PROMPT = (
  "You are given step-level error summaries for ONE trajectory.\n\n"
  "Task:\n"
  "Produce ONE trajectory-level error summary (<= 2 sentences) capturing the main recurring failure modes.\n\n"
  "CRITICAL anti-redundancy rule:\n"
  "- Do NOT repeat the same error across different steps.\n"
  "- If multiple steps share the same failure type, mention it ONCE and, if helpful, note it as recurring.\n"
  "- Keep language concise but high information density.\n\n"
  "- Do reasoning first, then put Final Answer in \\boxed{...} ONLY.\n\n"
  "step_error_summaries (JSON):\n"
  f"{json.dumps(step_summaries, ensure_ascii=False)}\n"
)
\end{lstlisting}

\end{tcolorbox}

\begin{tcolorbox}[colback=blue!5!white, colframe=gray!50!black, title=AlfWorld (LLM Agent) Error Summarization Prompt Template, breakable, enhanced jigsaw]

\begin{lstlisting}[style=prompt]
# Variables:
# - max_steps: int, maximum interaction steps allowed
# - task: str, natural-language task description
# - traj_text: str, full trajectory rendered as observation/action sequence
# - steps_information: str, highlighted steps where all reward-model evaluations are -1

ALFWORLD_ERROR_SUMMARY_PROMPT = (
  "<|im_start|>You are a helpful assistant. <|im_end|>\n"
  "<|im_start|>user\n"
  "You are analyzing a failed rollout of a policy in a text-based environment.\n"
  f"The rollout did NOT finish the task within {max_steps} interaction steps.\n"
  f"Task (natural language): {task}\n\n"
  "Full trajectory (observation/action sequence):\n"
  f"{traj_text}\n\n"
  f"Some steps that are marked by reward model to be highly possible incorrect: {steps_information}\n\n"
  "In at most TWO sentences, explain the most likely reasons the policy failed to finish in time.\n"
  "Be concrete (e.g., wrong exploration, looping, wrong target/location, inconsistent reasoning, hallucination, etc.).\n"
  "Put the final <=2 sentence summary in \\boxed{} and output NOTHING else.\n"
  "<|im_end|>\n"
  "<|im_start|>assistant"
)
\end{lstlisting}

\end{tcolorbox}

\subsection{Environment Modification and Prompt Templates}
\label{envprompt}

To obtain new tasks that better match the policy's current capability while preserving the essence of the original tasks (to prevent the task set from drifting too far from the original distribution), we design the following prompt templates for a reasoning model to adapt tasks based on summarized information about the policy's accuracy and the specific errors it makes on each task.

For the GUI-agent setting, for each task we provide a set of task templates: the original task is always included, and new but highly related templates are occasionally added. Specifically, we pre-create additional task templates for 47 of the 230 training tasks, resulting in 295 task templates in total (see examples in Appendix~\ref{templateexample}). In our ablation studies, all of these tasks are included in the training set. We provide information about where the policy is likely to make mistakes on the original task, and ask the model to select a new task template (if applicable) and write a new task prompt based on that template. When the goal is to make the task easier, the model can add tips to the prompt according to the summarized error patterns, enabling a more proactive adaptation for tasks the policy struggles with. When the goal is to make the task harder, the model can remove such tips and make the instruction more ambiguous. The choice of task template can also depend on the target difficulty and the type of perturbation.

For the LLM-agent setting on AlfWorld, we provide the model with the policy's performance on the task, along with basic environment information (e.g., what objects the environment contains, their properties, and where they are placed) to help the model decide how to modify the task. For example, if an original task in the subcategory ``pick and place'' is too hard for the policy because it cannot find the target object, the environment model replaces the target with an object that is easier to locate. Conversely, if the task is too easy for the policy, the environment model makes the target object harder to find.

\begin{tcolorbox}[colback=blue!5!white, colframe=gray!50!black, title=GUI Agent Task-Difficulty Adaptation Prompt Template (OSWorld), breakable, enhanced jigsaw]

\begin{lstlisting}[style=prompt]
# Variables:
# - goal: str, target difficulty direction, e.g., "easier" or "harder"
# - current_task_json: str, JSON string of the current task object
# - task_template_json: str, JSON string of a mapping evaluator_name -> canonical instruction
# - traj_summaries_json: str, JSON string of OPTIONAL historical trajectory-level error summaries
#   (each trajectory_summary is already deduplicated across steps)

SYSTEM_PROMPT = """<|im_start|>You are a helpful assistant. <|im_end|>
<|im_start|>user
You will help me adjust the difficulty of an OSWorld/desktop task.

You are given:
(1) current_task (JSON)
(2) task_template: a JSON object mapping evaluator_name -> a canonical instruction for that evaluator.
(3) previous_rollout_trajectory_summaries: OPTIONAL historical error analyses from earlier rollouts.
    - A task may have multiple trajectories (runs).
    - Each trajectory_summary is already deduplicated across steps (no repeated same error across steps).

Goal: make the task {{goal}}.

Rules:
- You MAY switch to a different evaluator from task_template (by changing the key), OR keep the same evaluator.
- You MAY rewrite the instruction to increase/decrease hint strength (add hints to make easier, remove hints to make harder).
- The instruction can NOT be too long.
- You MUST NOT change the essential task meaning compared to the chosen evaluator's template. Do NOT invent a new task.
- Do NOT invent new evaluator names. The output key must be one of the keys in task_template.
- You SHOULD use previous_rollout_trajectory_summaries to guide how you adjust difficulty:
  - If goal is EASIER: add minimal, targeted clarifying hints addressing recurring failure modes.
  - If goal is HARDER: remove such hints, but still keep the same essential task and stay within the chosen evaluator template.
- Output MUST be valid JSON ONLY (no markdown, no extra text).
- Output format MUST be the new current_task JSON object with EXACTLY ONE key:
  {"evaluatorX": "your rewritten instruction"}
- If you accidentally output other text, ensure the FINAL output segment is the JSON object.

current_task:
{{current_task_json}}

task_template:
{{task_template_json}}

previous_rollout_trajectory_summaries:
{{traj_summaries_json}}
<|im_end|>
<|im_start|>assistant
"""
\end{lstlisting}

\end{tcolorbox}


\begin{tcolorbox}[colback=blue!5!white, colframe=gray!50!black, title=AlfWorld (LLM Agent) Task-Difficulty Adaptation Prompt Template, breakable, enhanced jigsaw]

\textbf{Environment Summary from INIT }

\begin{lstlisting}[style=prompt]
# summarize_init_english(problem_text) returns:
# (1) summary_text: a human-readable English summary of INIT facts, including:
#     - object/receptacle types -> concrete instances
#     - locations
#     - canContain(type -> type) constraints
#     - other instantiated predicates grouped by predicate name
# (2) S: a structured dict of parsed facts for downstream constraints, including:
#     - obj2otype, rec2rtype
#     - otype2objs, rtype2recs
#     - canContain
#     - caps (capability sets like pickupable/toggleable/cleanable/heatable/coolable/sliceable)

summary_text, S = summarize_init_english(problem_text)
\end{lstlisting}

\textbf{Prompt Construction (environment info + failure summaries + goal editing instruction)}

\begin{lstlisting}[style=prompt]
# Variables:
# - problem_text: str, the raw problem specification containing INIT facts
# - item["failed_rollout_summaries"]: List[dict], each has fields like rollout_idx and summary
# - goal: str, "harder" or "easier"
# - acc_before: float, previous rollout accuracy (prev_acc)
# - task: str, task subtype name, e.g., "pick_and_place_simple"
# - goal_obj_types, goal_rec_types: List[str], original goal tokens/types
# - S: structured environment facts returned by summarize_init_english(...)
# - goal_brief_and_instruction(...): returns a compact, task-specific editing rubric

summary_text, S = summarize_init_english(problem_text)

prompt_text = (
  "<|im_start|>You are a helpful assistant. <|im_end|>\n"
  "<|im_start|>user\n"
  "Review the following details about an interactive environment. A related task will follow.\n"
  + summary_text
)

prompt_text += "\n\n---\n"

fails = item.get("failed_rollout_summaries", [])
if fails:
  prompt_text += "### Failure summaries from recent rollouts (failed rollouts only)\n"
  for rec in sorted(fails, key=lambda x: int(x.get("rollout_idx", 0))):
    rj = rec.get("rollout_idx", 0)
    ss = str(rec.get("summary", "")).strip()
    prompt_text += f"- Rollout {rj}: {ss}\n"
  prompt_text += "\n"

prompt_text += f"### Your job is to propose a new goal that makes the task **{goal.upper()}**.\n"
prompt_text += f"- The parent rollout accuracy (prev_acc) is {acc_before}.\n"
prompt_text += "- The new goal must be different from the original and follow the instructions.\n"
prompt_text += "- The overall framework of the goal cannot be changed; you may only modify two tokens within this framework.\n"
prompt_text += "Represent the new goal by outputting two tokens, placed inside \\boxed{} and separated by a comma, e.g., \\boxed{TOKEN_A,TOKEN_B}.\n"
prompt_text += "You need to think step by step then provide final result in \\boxed{}.\n"

prompt_text += "\n" + goal_brief_and_instruction(
  task, goal_obj_types, goal_rec_types, S, direction=goal, prev_acc=acc_before
)

prompt_text += "\n<|im_end|>\n<|im_start|>assistant"
\end{lstlisting}

\textbf{Task-Specific Editing Rubric (\texttt{goal\_brief\_and\_instruction})}

\begin{lstlisting}[style=prompt]
def goal_brief_and_instruction(task, goal_obj_types, goal_rec_types, S,
                               direction: Optional[str] = None,
                               prev_acc: Optional[float] = None):
    lines = []
    if direction in ("harder", "easier"):
        lines.append(f"### Difficulty goal: **{direction.upper()}** (prev_acc={prev_acc})")
        if direction == "harder":
            lines.append("- Prefer types with *fewer* available instances (rarer) while keeping constraints satisfied.")
            lines.append("- Prefer combinations likely requiring more search/steps, but still solvable in this environment.")
        else:
            lines.append("- Prefer types with *more* available instances (more common) while keeping constraints satisfied.")
            lines.append("- Prefer combinations likely easier to find/complete, but still valid.")

    if task == "pick_and_place_simple":
        g = (goal_obj_types[0] if goal_obj_types else "<?>",
             goal_rec_types[0] if goal_rec_types else "<?>")
        lines.append("**Overall Framework**: place an object type into/on a receptacle type.")
        lines.append(f"**Original goal**: place an object of type **{g[0]}** into/on a receptacle of type **{g[1]}**.")
        lines.append(f"The final output example is \\boxed{{{g[0]}, {g[1]}}}")
        lines.append("**Design instruction**: Output exactly **two tokens** - `<OBJ_TYPE> <REC_TYPE>`.")
        lines.append("- Constraints: pair must satisfy `canContain(REC_TYPE, OBJ_TYPE)`.")

    elif task == "look_at_obj_in_light":
        g = (goal_obj_types[0] if goal_obj_types else "<?>",
             goal_obj_types[1] if len(goal_obj_types) > 1 else "<?>")
        lines.append("**Overall Framework**: a light object type is present at the agent's location; the agent **holds** an object type.")
        lines.append(f"**Original goal (Example)**: a **toggleable and toggled** light object of type **{g[0]}** is present; agent **holds** type **{g[1]}**.")
        lines.append(f"The final output example is \\boxed{{{g[0]}, {g[1]}}}")
        lines.append("**Design instruction**: Output exactly **two tokens** - `<LIGHT_OBJ_TYPE> <HOLD_OBJ_TYPE>`.")
        lines.append("- Constraints: LIGHT must have `toggleable`; HOLD should have a `pickupable` instance in INIT.")

    elif task == "pick_clean_then_place_in_recep":
        g = (goal_obj_types[0] if goal_obj_types else "<?>",
             goal_rec_types[0] if goal_rec_types else "<?>")
        lines.append("**Overall Framework**: **clean** an object type and place it into/on a receptacle type.")
        lines.append(f"**Original goal**: clean type **{g[0]}** then place into/on type **{g[1]}**.")
        lines.append(f"The final output example is \\boxed{{{g[0]}, {g[1]}}}")
        lines.append("- Constraints: OBJ must be cleanable; canContain(REC,OBJ).")

    elif task == "pick_heat_then_place_in_recep":
        g = (goal_obj_types[0] if goal_obj_types else "<?>",
             goal_rec_types[0] if goal_rec_types else "<?>")
        lines.append("**Overall Framework**: **heat** an object type and place it into/on a receptacle type.")
        lines.append(f"**Original goal**: heat type **{g[0]}** then place into/on type **{g[1]}**.")
        lines.append(f"The final output example is \\boxed{{{g[0]}, {g[1]}}}")
        lines.append("- Constraints: OBJ must be heatable; canContain(REC,OBJ).")

    elif task == "pick_cool_then_place_in_recep":
        g = (goal_obj_types[0] if goal_obj_types else "<?>",
             goal_rec_types[0] if goal_rec_types else "<?>")
        lines.append("**Overall Framework**: **cool** an object type and place it into/on a receptacle type.")
        lines.append(f"**Original goal**: cool type **{g[0]}** then place into/on type **{g[1]}**.")
        lines.append(f"The final output example is \\boxed{{{g[0]}, {g[1]}}}")
        lines.append("- Constraints: OBJ must be coolable; canContain(REC,OBJ).")

    elif task == "pick_two_obj_and_place":
        g = (goal_obj_types[0] if goal_obj_types else "<?>",
             goal_rec_types[0] if goal_rec_types else "<?>")
        lines.append("**Overall Framework**: place **two distinct objects** (same type) into/on a receptacle type.")
        lines.append(f"**Original goal**: place two objects of type **{g[0]}** into/on type **{g[1]}**.")
        lines.append(f"The final output example is \\boxed{{{g[0]}, {g[1]}}}")
        lines.append("- Constraints: >=2 instances of OBJ_TYPE; canContain(REC,OBJ).")

    else:
        lines.append("**Original goal**: (unknown task type).")

    return "\n".join(lines)
\end{lstlisting}

\end{tcolorbox}

\subsection{Examples of Task Templates in GUI data}
\label{templateexample}

As we discussed in Appendix~\ref{envprompt}, we add new task templates to some tasks in the GUI training data. We provide examples as follows. Each task template is coupled with an evaluator and its corresponding verifiable outcome file.

\begin{tcolorbox}[colback=blue!5!white, colframe=gray!50!black, title=AlfWorld (LLM Agent) Task-Difficulty Adaptation Prompt Template, breakable, enhanced jigsaw]

\begin{lstlisting}[style=prompt]
TASK_TEMPLATE_EXAMPLES = r"""
Example 1:
"task_template": {
  "evaluator1": "Work out the monthly total sales in a new row called 'Total', and then create a line chart to show the results (with Months on the x-axis).",
  "evaluator2": "Work out the monthly total sales in a new row called 'Total'.",
  "evaluator3": "Work out January's total sales in a new row called 'Total'.",
  "evaluator4": "Work out the monthly total sales in a new row called 'Total', and then create a line chart to show the results (with Months on the x-axis, for January, February, and March only)."
}

Example 2:
"task_template": {
  "evaluator1": "Fill all blank cells in B1:E30 with the value from the cell directly above. Finish the task and do not modify irrelevant regions, even if they are blank.",
  "evaluator2": "Fill all blank cells in B1:B30 with the value from the cell directly above. Finish the task and do not modify irrelevant regions, even if they are blank.",
  "evaluator3": "Fill all blank cells in E1:E30 with the value from the cell directly above. Finish the task and do not modify irrelevant regions, even if they are blank.",
  "evaluator4": "Fill all blank cells in E1:E24 with the value from the cell directly above. Finish the task and do not modify irrelevant regions, even if they are blank."
}

Example 3:
"task_template": {
  "evaluator1": "I have compute the acceleration in row 2 and I want you to fill out other rows for column B and D. Next concatenate the values from columns A to D, including their headers (the pattern is \"Header: cell value, ..., Header: cell value\"), into a new column named \"Combined Data\" for all rows. In the new column, only keep 2 decimal digits.",
  "evaluator2": "I have computed the acceleration in row 2, and I want you to fill in the remaining rows for columns B and D."
}

Example 4:
"task_template": {
  "evaluator1": "Set the background color to yellow for any slide that contains one or more images of real people, and set the title of slide 2 as \"Let's start\".",
  "evaluator2": "Set the background color to yellow for any slide that contains one or more images of real people",
  "evaluator3": "Set the background color to yellow for slide 2"
}

Example 5:
"task_template": {
  "evaluator1": "I would like to make the first three words of the sentence left-aligned and the rest right-aligned. I basically want to have some empty space in the middle to add some photos. Assume that every sentence will have at least three words. Could you help me on alignment for me using tabstops?",
  "evaluator2": "I would like to make the first word of the sentence left-aligned and the rest right-aligned. I basically want to have some empty space in the middle to add some photos. Assume that every sentence will have at least three words. Could you help me on alignment for me using tabstops?"
}
"""
\end{lstlisting}

\end{tcolorbox}

\subsection{Task Specific Algorithm}
\label{secalg:coevolvespec}

\begin{algorithm}[H]
\caption{Task Specific Algorithm Pipeline}
\label{alg:coevolvespec}
\begin{algorithmic}[1]
\STATE \textbf{Given:} environment task set $Q$; policy $\pi_{\theta}$; reward model $r_{\phi}$; thresholds $\alpha_{\text{high}},\alpha_{\text{low}}$.
\STATE $\mathcal{T} = \emptyset$ is the temporary task set.
\FOR{$k=1,\ldots,K$} 
  \STATE \textbf{Sampling:}
  \STATE Sample sub set of tasks $\mathcal{S} \subset Q / \mathcal{T}$, $\mathcal{S}' = \mathcal{S}$.
  \STATE \textbf{if} $k \neq 1$: 
  \STATE \quad $\mathcal{S}' = \mathcal{S} \cup \mathcal{T}$.
  \STATE \textbf{if} task type is OSWorld or AlfWorld.
  \STATE \quad For each $q \in \mathcal{S}'$, $\pi_{\theta}$ samples trajectories $\mathbb{T}_q$, each $\tau \in \mathbb{T}_q$, $\tau=(\tau_1,\ldots,\tau_{T_{\tau}})$. Outcome $O_{\tau}$.
  \STATE \quad For each $\tau \in \mathbb{T}_q$, reward model samples reasoning $r_{\tau_i,j}$ and final score $S_{\tau_i,j}$ for $1\le j\le m$.
  \STATE \quad Compute step-wise quality for each $\tau_i$, $R_{\tau_i}=O_{\tau}+ \lambda \sum_{j=1}^m S_{\tau_i,j}/m$\quad for $\lambda > 0$, $i=1,\ldots,T_{\tau}$.
  \STATE \quad $R_{\tau_i, j} = R_{\tau_i}\cdot S_{\tau_i,j}$.
  \STATE \textbf{else if} task type is coding.
  \STATE \quad For each $q \in \mathcal{S}'$, $\pi_{\theta}$ samples $\mathbb{T}_q$. $U_q$ is gt UTs set. $O_{\tau} \in \{ -1, 1\}$ is execution result of $\tau$ on $U_q$.
  \STATE \quad For each $\tau \in \mathbb{T}_q$, reward model samples reasoning $r_{q,j}$ and unit test $u_{q, j}$, $1 \le j \le m$.
  \STATE \quad $O_{q, j} = 1$ if $u_{q, j}$ passes all $\{ \tau \in \mathbb{T}_q \mid O_{\tau} = 1 \}$, else -1.
  \STATE \quad Compute reward for policy, $R_{\tau} = \text{accuracy of code $\tau$ on $U_q $}$.
  \STATE \quad $R_{\tau, j} = 1 - p$ if $O_{q, j} = 1$, $R_{\tau, j} = -p$ if $O_{q, j} = -1$. $p$ is pass rate of $u_{q, j}$ on $\{ \tau \in \mathbb{T}_q \mid O_{\tau} = -1\}$.
  \STATE \textbf{Accept New tasks:} 
  \STATE $\mathcal{N} = \emptyset$ \# collect newly accepted tasks.
\STATE \textbf{for} each $(q, q') \in \mathcal{P}$: \# all tasks' accuracy in $\mathcal{P}$ has been updated in the above step
  \STATE \quad \textbf{if} $\mathrm{acc}(q)>\alpha_{\text{high}}$ and $\alpha_{\text{low}} < \mathrm{acc}(q')<\mathrm{acc}(q)$: \# $\mathrm{acc}(q') > \alpha_{\text{low}} > 0$, safe
    \STATE \quad \quad 
       replace $q \leftarrow q'$ in $Q$, and $\mathcal{N} = \mathcal{N} \cup \{ q' \}$. \# The `harder' goal has been achieved, accept
  \STATE \quad \textbf{else if} $\mathrm{acc}(q)<\alpha_{\text{low}}$ and $\mathrm{acc}(q) < \mathrm{acc}(q') < \alpha_{\text{high}}$: \# $\mathrm{acc}(q') > \mathrm{acc}(q) > 0$, safe
    \STATE \quad \quad 
       replace $q \leftarrow q'$ in $Q$, and $\mathcal{N} = \mathcal{N} \cup \{ q' \}$. \# The `easier' goal has been achieved, accept
       \STATE $\mathcal{P} = \emptyset$, $\mathcal{T} = \emptyset$. \# ready to update them
       \STATE \textbf{Adapt environment task:} 
\STATE \textbf{for} each $q \in \mathcal{S}$:
  \STATE \quad $s \leftarrow \text{summarize step-wise errors }(\{(\tau_i, r_{\tau_i,j}) \mid  1 \le i \le T_{\tau}, \tau 
  \in \mathbb{T}_q \})$ for $q$.
  \STATE \quad \textbf{if} $\mathrm{acc}(q)>\alpha_{\text{high}}$: \# set goal `harder'
    \STATE \quad \quad Propose harder task: $q' \leftarrow \mathrm{harder}(q;s)$.
    \STATE \quad \quad $\mathcal{T} = \mathcal{T}\cup \{ q'\}$ and $\mathcal{P} = \mathcal{P} \cup \{ (q, q')\} $.
  \STATE \quad \textbf{else if} $\mathrm{acc}(q)<\alpha_{\text{low}}$: \# set goal `easier'
    \STATE \quad \quad Propose easier task: $q' \leftarrow \mathrm{easier}(q;s)$.
    \STATE \quad \quad $\mathcal{T} = \mathcal{T}\cup \{ q'\}$ and $\mathcal{P} = \mathcal{P} \cup \{ (q, q')\} $.
  \STATE \textbf{Update policy $\pi_{\theta}$ and reward model $r_{\phi}$:}
  \STATE \textbf{for} $q \in \mathcal{S} \cup \mathcal{N}$:
  \STATE \quad Compute advantages $A^{\pi_{\theta}}_{\tau_i}$ by standardize $ R_{\tau_i}$ across $\tau \in \mathbb{T}_q$ at same $i$.
  \STATE Train $\pi_{\theta}$ with $\{ (\tau_i, A^{\pi_{\theta}}_{\tau_i}) \mid q \in \mathcal{S} \cup \mathcal{N}, \tau \in \mathbb{T}_q, 1 \le i \le T_{\tau} \}$. \# each pair is (response, advantage)
  \STATE \textbf{for} $q \in \mathcal{S} \cup \mathcal{N}$ and $\alpha_{\text{low}} \le \mathrm{acc} \le \alpha_{\text{high}}$:
  \STATE \quad Compute advantages $A^{r_{\phi}}_{\tau_i, j}$ by standardize $R_{\tau_i, j}$ across $j$, for each $\tau_i$.
  \STATE Train $r_{\phi}$ with $\{ (r_{\tau_i, j}, A^{r_{\phi}}_{\tau_i, j}) \mid q \in \mathcal{S} \cup \mathcal{N}, \tau \in \mathbb{T}_q, 1 \le i \le T_{\tau}, 1 \le j \le m \}$. 
\ENDFOR
\end{algorithmic}
\end{algorithm}

\end{document}